\documentclass{article} % For LaTeX2e
\usepackage{iclr2026_conference,times}
\usepackage{MyPreamble}
\usepackage{graphicx}

\newcommand\TODO[1][]{{\color{orange}[TODO\ifthenelse{\equal{#1}{}}{}{: #1}]}}
\newcommand\Ours{GRAPHITE}
\newcommand\OursFull{\emph{\underline{GRA}ph homo\underline{PHI}ly boos\underline{TE}r}}
\newcommand\EQref[1]{Equation~\eqref{#1}}

\usepackage{hyperref}
\usepackage{url}

\title{Graph homophily booster: Reimagining the role of discrete features in heterophilic graph learning}
\author{Ruizhong Qiu\thanks{Equal contribution.},\, Ting-Wei Li\footnotemark[1],\, Gaotang Li, Hanghang Tong\\
University of Illinois Urbana--Champaign, IL, USA \\
$\mathtt{\{rq5\textrm,\,twli\textrm,\,gaotang3\textrm,\,htong\}\,@\,illinois\texttt.edu}$
}

\iclrfinalcopy % Uncomment for camera-ready version, but NOT for submission.
\begin{document}

\maketitle

\begin{abstract}

Graph neural networks (GNNs) have emerged as a powerful approach to modeling graph-structured data and demonstrated remarkable success in many real-world applications such as complex biological network analysis, neuroscientific analysis, and social network analysis. However, existing GNNs often struggle with heterophilic graphs, where connected nodes tend to have dissimilar features or labels. While numerous methods have been proposed to address this challenge, they primarily focus on architectural designs without directly targeting the root cause of the heterophily problem. These approaches still perform even worse than the simplest multi-layer perceptrons (MLPs) on challenging heterophilic datasets. For instance, our experiments show that 23 latest GNNs still fall behind the MLP on the \textsc{Actor} dataset. This critical challenge calls for an innovative approach to addressing graph heterophily beyond architectural designs. To bridge this gap, we propose and study a new and unexplored paradigm: \emph{directly} increasing the graph homophily via a carefully designed graph transformation. In this work, we present a simple yet effective framework called \OursFull{} (\Ours{}) to address graph heterophily. To the best of our knowledge, this work is the first method that explicitly transforms the graph to directly improve the graph homophily. Stemmed from the exact definition of homophily, our proposed \Ours{} creates \emph{feature nodes} to facilitate homophilic message passing between nodes that share similar features.  Furthermore, we both theoretically and empirically show that our proposed \Ours{} significantly increases the homophily of originally heterophilic graphs, with only a slight increase in the graph size. Extensive experiments on challenging datasets demonstrate that our proposed \Ours{} significantly outperforms state-of-the-art methods on heterophilic graphs while achieving comparable accuracy with state-of-the-art methods on homophilic graphs. Furthermore, our proposed graph transformation alone can already enhance the performance of homophilic GNNs on heterophilic graphs, even though they were not originally designed for heterophilic graphs. 
Our code is publicly available at \url{https://github.com/q-rz/ICLR26-GRAPHITE}. 
\end{abstract}
\section{Introduction}

% \RZ{We plan to heavily rewrite the content from the following perspective: (i) Since nodes that share features are more likely to belong to the same class, we can add edges between such node pairs to increase the homophily of the graph. (ii) However, this might add $O(n^2)$ edges. To keep sparsity, we introduce a feature node for each feature and connect it to all nodes that have this feature. This can reduce the number of edges to $O(n\rho)$ where $\rho$ is the sparsity rate of the feature matrix.}

% \RZ{To support our idea, we can provide both empirical results and theoretical analysis as our motivation. We can probably show that our graph transformation improves the homophily.}

Graph neural networks (GNNs) have emerged as a powerful class of models for learning %and making predictions 
on topologically structured data. Their ability to incorporate graph topology and node-level attributes has enabled them to achieve state-of-the-art performance in a wide range of applications. These include protein function prediction, where GNNs model complex biological networks~\citep{you2021deepgraphgo,reau2023deeprank}; neuroscientific analysis, where they are used to model brain networks~\citep{li2023interpretable}; and social network analysis, where they help uncover patterns among users~\citep{li2023survey}.

A critical challenge that many GNNs are faced with is that real-world networks can exhibit heterophily, where connected nodes tend to have dissimilar features or labels. Examples include protein--protein interaction networks where different types of proteins interact~\citep{zhu2020beyond}, or online marketplace networks where buyers connect with sellers rather than other buyers~\citep{pandit2007netprobe}. Standard GNN architectures~\citep{kipf2016semi, wu2019simplifying, velivckovic2017graph, hamilton2017inductive, chen2020simple, abu2019mixhop}, with their heavy reliance on neighborhood aggregation, often struggle with heterophilous graphs since aggregating features from dissimilar neighbors can dilute or distort node representations.
Existing methods for heterophilic graphs mainly focus on designing new GNN architectures as workarounds for heterophilic graphs, such as separating ego and neighbor embeddings \citep{zhu2020beyond,zhang2023steering}, incorporating multi-hop information via learnable weights \citep{chien2020adaptive,dong2024differentiable}, and adaptive self-gating to leverage both low- and high-frequency signals \citep{bo2021beyond}. More recent solutions introduce frequency-based filtering to handle both homophily and heterophily or leverage adaptive residual connections to further enhance flexibility \citep{xu2023node, xu2024slog, yan2024trainable}.

Despite plenty of architectural advances, many GNNs still perform even worse than the simplest multi-layer perceptrons (MLPs) on challenging heterophilic graphs. For instance, Table~\ref{tab:main-res} shows that 23 latest GNNs still fall behind the MLP on the \textsc{Actor} dataset. This critical challenge calls for an innovative approach to addressing graph heterophily beyond architectural designs. 

To bridge this gap, we propose and study a new and unexplored paradigm: \emph{directly} increasing the graph homophily via a carefully designed graph transformation. In this work, we present a simple yet effective framework called \OursFull{} (\Ours{}) to address graph heterophily. To the best of our knowledge, this work is the first method that explicitly transforms the graph to directly improve the graph homophily. 

\begin{figure}[t]
\centering
\includegraphics[width=0.76\linewidth]{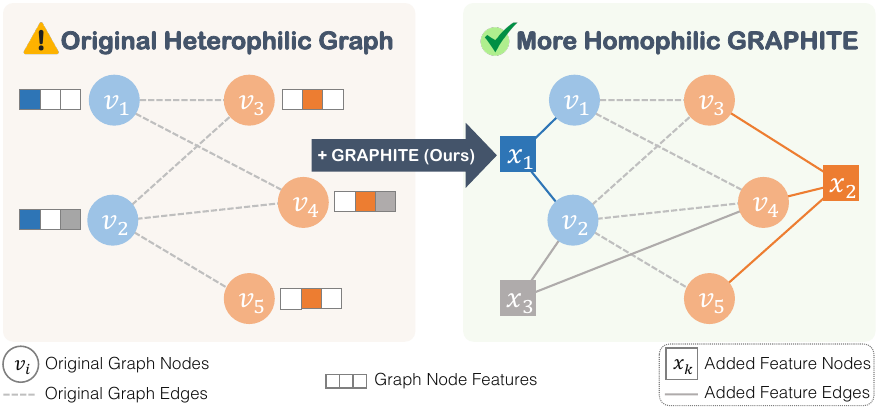}%0.8
\caption{Illustration of our proposed \Ours{}. Feature nodes/edges facilitate more homophilic message passing. For instance, feature node $x_1$ facilitates homophilic message passing between  nodes $v_1,v_2$, and feature node $x_2$ facilitates homophilic message passing among nodes $v_3,v_4,v_5$. 
%\hh{overall, it looks good. 1. arrange the feature nodes to make them close to the corresponding entity nodes, e.g., put x1 between v1 and v2, put x2 in the middle of v3, v4, and v5. 2. add legend: circles for entity nodes; squares for feature nodes; dash line: the original edges; solid line: xxx}\RZ{sounds good. I have updated the figure}
}
\label{fig:illus}
\vspace{-1em}
\end{figure}

Our key idea is rooted in the exact definition of homophily/heterophily. In a homophilic/heterophilic graph, nodes that share similar features are more/less likely to be adjacent, respectively. Therefore, a natural idea to increase the graph homophily is to create ``shortcut" connections between nodes with similar features so as to facilitate homophilic message passing. However, na\"ively adding mutual connections between such node pairs can drastically increase the number of edges. 
% For example, even if a graph has only 2,000 nodes, the na\"ive approach can add as many as 1,999,000 ``shortcut'' edges.
To reduce the number of ``shortcut'' edges, we propose to connect such node pairs \emph{indirectly} instead. In particular, we introduce \emph{feature nodes} as ``hubs'' and connect graph nodes to their corresponding feature nodes. We further theoretically show that our proposed method can provably enhance the homophily of originally heterophilic graphs without increasing the graph size much. 

Our main contributions are summarized as follows:
\begin{itemize}
    \item\textbf{New paradigm.} We propose and study a new and unexplored paradigm: \emph{provably} increasing the graph homophily directly via non-learning-based graph transformation. This paper is the first work on this paradigm to the best of our knowledge. 
    \item\textbf{Proposed method.} We propose a simple yet effective method called \Ours{}, which creates feature nodes as ``shortcuts'' to facilitate homophilic message passing between nodes with similar features. 
    \item\textbf{Theoretical guarantees.} We theoretically show that  \Ours{} can \emph{provably} enhance the homophily of originally heterophilic graphs with only a \emph{slight} increase in size. 
    \item\textbf{Empirical performance.} Extensive experiments on challenging datasets demonstrate the effectiveness of our proposed \Ours{}. \Ours{} \emph{significantly} outperforms state-of-the-art methods on heterophilic graphs while achieving \emph{comparable} accuracy with state-of-the-art methods on homophilic graphs. Furthermore, our proposed graph transformation alone can already enhance the performance of homophilic GNNs on heterophilic graphs. 
\end{itemize}

% Extensive experiments on challenging benchmark datasets demonstrate the effectiveness of our approach. On the standard heterophily benchmarks including Actor, Squirrel-Filtered, Chemeleon-Filtered and Minesweeper datasets, \Ours{} consistently outperforms state-of-the-art methods by significant margins. At the same time, \Ours{} achieves comparable or better results on homophilious benchmarks, demonstrating its versatility across different graph structures. The success of \Ours{} across diverse datasets and heterophily levels demonstrates that our simple architectural \hh{we are not changing architecture, but the graph right? check}modification effectively addresses the heterophily challenge while maintaining computational efficiency. The source code and implementation of our method will be made publicly available upon publication.
% \hh{mention/promise the code release}\tw{added}
\section{Preliminaries}
\label{sec:prob-def}

% In this paper, we consider graph datasets with discrete features, i.e., the node attributes are represented as binary or categorical vectors, such as bag-of-words features in Cora and Citeseer datasets~\cite{sen2008collective}. Formally, we define graphs with discrete feature sets and describe the problem statement in this section. 

\subsection{Notation}
% \TODO[graph, gnn, homophily]
% \tw{I use $C$ to denote the number of classes and use $y_u$ to denote the node label of node $u$ in the homophily analysis section}
% Heterogeneous feature sets in graphs pose unique challenges in representation and learning, as they deviate from the standard assumption of uniform feature dimensions across all nodes within the graph. Such graphs frequently occur in real-world applications where entities represented by nodes have varying types and quantities of information. For example, on social networks, some user profiles may contain detailed demographic and activity data, while others may only have sparse information due to privacy restrictions. Similarly, in knowledge graphs, entities may be associated with attributes of varying cardinality and relevance, depending on their roles within the graph.

% To capture this variability formally, we define a graph with discrete feature sets as follows.

An undirected graph with discrete node features can be represented as a triple $\mathcal{G} = ( \mathcal{V}, \mathcal{E}, \BM X )$, where $\mathcal{V}=\{v_1,\dots,v_{|\CAL V|}\}$ denotes the node set, $\mathcal{E}\subseteq\CAL V\times\CAL V$ denotes the edge set, $\BM X \in \{0,1\}^{\mathcal{V}\times \mathcal{X}}$ is a binary node feature matrix representing discrete node features, and $\mathcal{X}=\{1,\dots,|\CAL X|\}$ is the feature set containing all the discrete node features. In addition to that, each graph node $v_i \in \mathcal{V}$ has a node label $y_{v_i} \in\mathcal{Y}$, where $\mathcal{Y}$ is the label set with $C = |\mathcal{Y}|$ classes.

% \begin{definition}[Graph with discrete feature set] A graph is defined as $\mathcal{G} = ( \mathcal{V}, \mathcal{E}, \mathbf{X} )$, where $\mathcal{V}$ is the node set, $\mathcal{E}$ is the edge set and $\mathbf{X}$ is the feature collection = $[ \mathbf{x}_1, \mathbf{x}_2, \cdots, \mathbf{x}_{|\mathcal{V}|}]$ where $\mathbf{x}_i \in \mathbb{R}^{|F_i|}, \forall i \in [|\mathcal{V}|]$. Note that  $F_i$ is the available set of features for node $i$.
% \label{def-1}
% \end{definition}

% \hh{do we still need to emphasize this (i.e., the irregularity or heterogeneity of the feature). i think what matters is the sparsity of the feature right?}\RZ{I will rewrite preliminaries soon}

% This definition encapsulates the irregularity of node features within the graph. Unlike conventional graph representations, where each node is assumed to have a fixed-dimensional feature vector, this formulation allows nodes to have variable-length feature vectors, determined by $|F_i|$. 
% The heterogeneity of these feature sets presents unique challenges for graph learning models, as they must effectively handle nodes with differing feature spaces while preserving the relationships encoded in the graph topology.

\subsection{Problem Definition}

In this paper, we study two key problems: (i) how to transform a graph to increase its homophily and (ii) how to perform node classification on a heterophilic graph datasets. Formally, we introduce the problem definitions as follows.

\begin{PRB}[boosting graph homophily] Given a highly heterophilic graph, transform the graph to increase its homophily. \textbf{Input:} a heterophilic graph $\mathcal{G}$. \textbf{Output:} a transformed graph $\mathcal{G}^*$  with higher homophily.
\label{def:homo-boost}
\end{PRB}

% Building on Definition~\ref{def-1}, the central problem we address in this work can be expressed as follows.\hh{in Section 2 and Sections 3.1-3.3, consider to re-use the example in Figure 1 to explain some key notations}

\begin{PRB}[semi-supervised node classification on a heterophilic graph]
Given a heterophilic graph and a set of labelled nodes, train a model to predict the labels of unlabelled nodes. \textbf{Input:} (i) a heterophilic graph \( \mathcal{G}=(\CAL V,\CAL E,\BM X)\); (ii) a labelled node set \( \mathcal{V}_\textnormal{L} \subset \mathcal{V} \) whose node labels $[y_{v_i}]_{v_i\in\CAL V_\textnormal L}$ are available. \textbf{Output:} the predicted labels of unlabeled nodes \( \mathcal{V} \setminus \mathcal{V}_\textnormal{L} \).

\label{def:node-classify}
\end{PRB}

% Given a graph defined in Definition~\ref{def-1}, we aim to find a mapping function $\Theta: \mathcal{V} \rightarrow \mathbb{R}^\ell$ that projects each node to an $\ell$-dimensional space. The resulting $\ell$-dimensional node embeddings are used to solve downstream node classification tasks. 

% The goal of Problem~\ref{def-2} is to generate node embeddings that effectively capture both the structural information from the graph and the discrete nature of node features. Achieving this requires designing models and algorithms that address the challenges introduced by heterogeneous feature sets, such as handling variable feature dimensions, ensuring robustness to missing data, and maintaining computational efficiency. The solutions to this problem have far-reaching implications for applications across multiple domains, including social network analysis, recommendation systems, and biological network modeling.

\section{Proposed Method: \Ours{}}
In this section, we propose a simple yet effective graph transformation method called \OursFull{} (\Ours{}) that can efficiently increase the homophily of a graph. In Section~\ref{ssec:motiv}, we will introduce the motivation of our proposed \Ours{}. %In Section~\ref{ssec:method},
First, we will present the design of our proposed method \Ours{}. %In Section~\ref{ssec:method-detail},
Then, we will describe the neural architecture of our proposed method. Due to the page limit, proofs of theoretical results are deferred to the appendix.%Appendix~\ref{app:proofs}. 
%\RZ{I've rewritten the method section}

\subsection{Motivation: Na\"ive Homophily Booster}\label{ssec:motiv}

Graph heterophily is a ubiquitous challenge in graph-based machine learning. On a highly heterophilic graph, many neighboring nodes exhibit dissimilar features or belong to different classes. As a result, graph heterophily limits the effectiveness of GNN message passing, as standard aggregation schemes might fail to capture meaningful patterns in heterophilic neighbors.

Existing methods for heterophilic graphs mainly focus on designing workarounds such as new architectures or learning paradigms for heterophilic graphs, including adaptive message passing, higher-order neighborhoods, or alternative propagation mechanisms that leverage both local and global graph structures. 

In contrast to existing workaround methods, we propose a new method that aims to directly increase the homophily of the graph via a specially designed graph transformation. To the best of our knowledge, this work is the first method that explicitly transforms the graph to improve the homophily of the graph. 

Our idea is rooted in the exact definition of homophily and heterophily. In a heterophilic graph, nodes that share similar features are more likely to be non-adjacent. However, in a homophilic graph, nodes that share similar features should be more likely to be neighbors. Therefore, a natural idea to increase the homophily of the graph is to create ``shortcut" connections between nodes with similar features, which will facilitate homophilic message passing between them.
%\hh{which will in turn facilitate the message passing between them}\RZ{sounds good. added}

Before we introduce the proposed method, let's consider the following na\"ive approach to implementing the aforementioned idea: For each pair of nodes $v_i,v_j \in\CAL V$, if they share at least a feature (i.e., $\|\BM X[v_i,:]\land\BM X[v_j,:]\|_\infty>0$), we add a ``shortcut'' edge $(v_i,v_j)$ between them. Let's call this approach the \emph{na\"ive homophily booster} (NHB). The following Theorem~\ref{thm:naive} shows that NHB can indeed increase the homophily of the graph under mild and realistic assumptions. 

\begin{THM}[na\"ive homophily booster]\label{thm:naive}
Given a heterophilic graph $\CAL G=(\CAL V,\CAL E,\BM X)$, let $\CAL E^\dagger$ denote the set of edges after adding the NHB ``shortcut'' edges, and let $\CAL G^\dagger:=(\CAL V,\CAL E^\dagger,\BM X)$ denote the graph transformed by NHB. Under mild and realistic assumptions in Appendix~\ref{app:ass}, we have
\GA{
\OP{hom}(\CAL G^\dagger)>\OP{hom}(\CAL G),\\
|\CAL E^\dagger|-|\CAL E|\le O(|\CAL V|^2).\label{eq:naive-edges}
}
\end{THM}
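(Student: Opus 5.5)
We treat $\OP{hom}$ as edge homophily, i.e., the fraction of edges whose endpoints share a label:
\[
\OP{hom}(\CAL G)=\frac{|\{(u,v)\in\CAL E: y_u=y_v\}|}{|\CAL E|}.
\]
The appendix assumptions are not visible here, so we take them to say the following. Among node pairs that share at least one feature, the fraction with equal labels, call it $h^\dagger$, is strictly larger than $\OP{hom}(\CAL G)$. This is the ``nodes sharing features tend to share labels'' hypothesis, combined with $\CAL G$ being heterophilic, e.g.\ $\OP{hom}(\CAL G)$ below the feature-sharing label agreement rate. If the appendix phrases this probabilistically, we compare expectations instead of counts.

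For the homophily claim, write $\CAL E^\dagger=\CAL E\cup\CAL S$, where $\CAL S$ is the set of shortcut pairs. Split $\CAL S$ into the new edges $\CAL S\setminus\CAL E$ and the already present ones $\CAL S\cap\CAL E$.

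1. Let $m=|\CAL E|$ and $h=\OP{hom}(\CAL G)$. Let $a=|\CAL S\setminus\CAL E|$, and let $h'$ be the fraction of pairs in $\CAL S\setminus\CAL E$ with equal labels.
2. The new homophily is then a mediant:
\[
\OP{hom}(\CAL G^\dagger)=\frac{hm+h'a}{m+a}.
\]
3. This is strictly greater than $h$ if and only if $a>0$ and $h'>h$.

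The key step is to obtain $h'>h$ from the assumption. The assumption controls all feature-sharing pairs, not just the newly added ones. Since $\CAL S$ is the disjoint union of $\CAL S\setminus\CAL E$ and $\CAL S\cap\CAL E$, its agreement rate $h^\dagger$ is a weighted average of the two parts. So either the assumption is stated directly on non-adjacent feature-sharing pairs, which is the cleanest reading, or we bound the contribution of $\CAL S\cap\CAL E$: those pairs are original edges, so their agreement rate is at most $1$, and the overlap is small relative to $a$. I expect this decomposition to be the main obstacle, and I would first check whether the appendix assumption already conditions on non-adjacency. Nondegeneracy ($a>0$) also needs to be assumed, i.e., at least one non-adjacent pair shares a feature.

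The edge bound is immediate. We add at most one edge per unordered pair, so
\[
|\CAL E^\dagger|-|\CAL E|\le\binom{|\CAL V|}{2}=O(|\CAL V|^2).
\]
We could also note that this can be tight: if a single feature is shared by all nodes, $\Theta(|\CAL V|^2)$ edges are added. This motivates the feature-node construction introduced later.
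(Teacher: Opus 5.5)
Your skeleton matches the paper's proof. The paper's technical lemma says that appending a multiset $\CAL B$ whose mean exceeds that of $\CAL A$ strictly raises the mean. That is your mediant step, applied with $\CAL A$ the similarities over $\CAL E$ and $\CAL B$ those over $\CAL E^\dagger\setminus\CAL E$. The condition $a>0$ comes from the assumed non-adjacent feature-sharing pair, and the edge bound is the same $\binom{|\CAL V|}{2}$ count.

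The difference is what $\OP{hom}$ means, and that difference is what creates your ``main obstacle.'' In the paper, $\OP{hom}$ is \emph{feature} homophily with $\OP{sim}(v_i,v_j)=\|\BM X[v_i,:]\land\BM X[v_j,:]\|_\infty$. Heterophily means $\OP{hom}(\CAL G)<1$ together with the existence of a non-adjacent pair with $\OP{sim}>0$. With this metric the key step is automatic. Every pair NHB adds shares some feature $k$, so, since $\BM X$ is binary, $\BM X[v_i,k]=\BM X[v_j,k]=1$ and $\OP{sim}(v_i,v_j)=1>\OP{hom}(\CAL G)$. Thus $h'=1>h$. The split of $\CAL S$ into $\CAL S\cap\CAL E$ and $\CAL S\setminus\CAL E$ never matters, and no hypothesis relating features to labels is needed.

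Under your label-based reading, the step does not close. A hypothesis on \emph{all} feature-sharing pairs does not yield $h'>h$, and nothing implies that the overlap is small relative to $a$. Here is a counterexample. Give each pair of $\CAL S$ its own private feature. Take $100$ original edges, where the $10$ label-agreeing ones all lie in $\CAL S$ and the $90$ disagreeing ones lie outside it. Add $a=20$ new pairs, of which $1$ agrees. Then $h=0.1<h^\dagger=11/30$, yet $\OP{hom}(\CAL G^\dagger)=11/120<h$.

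Your ``cleanest reading'' does close the argument: assume the agreement rate on non-adjacent feature-sharing pairs exceeds $h$. But by your own step 3, that hypothesis together with $a>0$ is equivalent to the conclusion, so it proves nothing beyond what it assumes. The missing idea is to measure homophily with the same feature-overlap criterion NHB uses to add edges. That makes every shortcut edge maximally similar by construction.
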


However, \EQref{eq:naive-edges} also shows that NHB is extremely inefficient despite its effectiveness in increasing homophily. For instance, even if the graph has only 2,000 nodes, NHB can add as many as 1,999,000 ``shortcut'' edges. The plenty of ``shortcut'' edges can drastically slow down the training and the inference process of GNNs. Hence, this na\"ive approach is computationally impractical for GNNs. To address this computational challenge, we will instead propose an efficient homophily booster %in Section~\ref{ssec:method}
via a more careful design of ``shortcut'' edges. 

\begin{figure*}[t]
\centering
\includegraphics[width=0.9\linewidth]{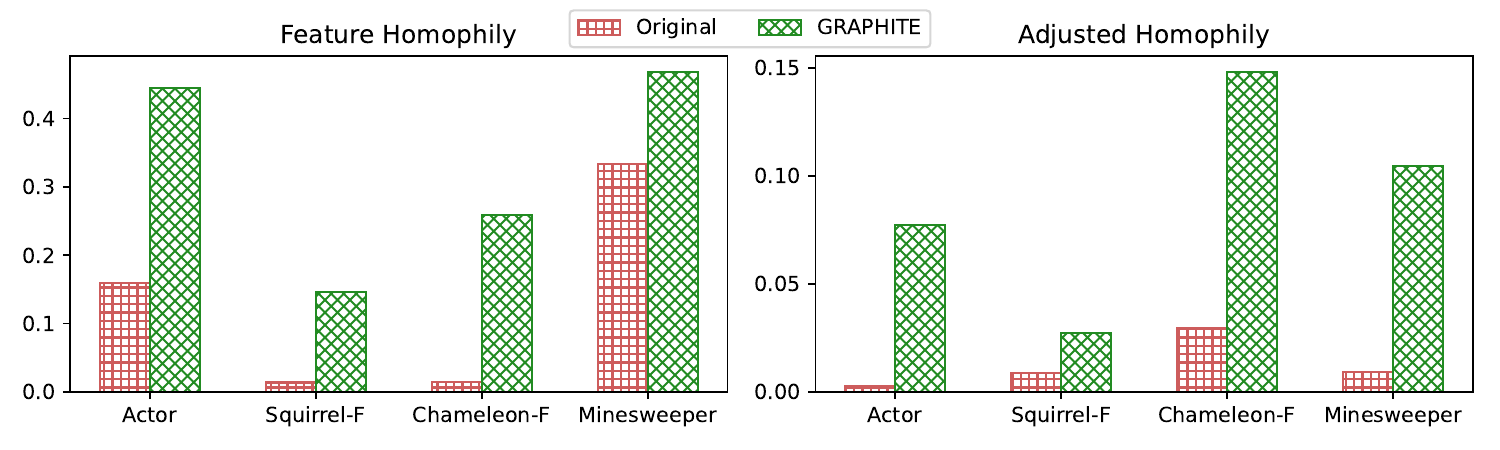}
\caption{Our proposed \Ours{} significantly increases the homophily of originally heterophilic graphs. We report two latest homophily metrics: \emph{feature homophily} \citep{jin2022raw} and \emph{adjusted homophily} \citep{platonov2024characterizing}. 
}
\label{fig:homo-analysis}
\vspace{-1em}
\end{figure*}

\subsection{Efficient Graph Homophily Booster}\label{ssec:method}

To address the computational inefficiency of the motivating na\"ive approach above, %in Section~\ref{ssec:motiv},
we propose an efficient, simple yet effective graph transformation method called \OursFull{} (\Ours) in this subsection. 

Note that the large number of NHB ``shortcut'' edges is because NHB \emph{directly} connects nodes with similar features. Since there are $O(|\CAL V|^2)$ node pairs in a graph, then the total number of added NHB ``shortcut'' edges can be as large as $O(|\CAL V|^2)$. 

To reduce the number of ``shortcut'' edges, we propose to connect such node pairs \emph{indirectly} instead. In particular, if we can create a few auxiliary ``hub'' nodes so that all such node pairs are \emph{indirectly} connected through the ``hub'' nodes, then we will be able to significantly reduce the number of ``shortcut'' edges at only a small price of adding a few ``hub'' nodes. Therefore, we need to develop an appropriate design of the ``hub'' nodes. 

\vspace{0.5em}\noindent\textbf{Graph transformation.} Following the aforementioned motivation, we propose to create a \emph{feature node} $x_k$ for each feature $k$ to serve as the ``hub'' nodes. Let $\CAL V_\CAL X$ denote the set of feature nodes:
\AL{\CAL V_\CAL X:=\{x_k:k\in\CAL X\}.}
To distinguish feature nodes $\CAL V_\CAL X$ from nodes $\CAL V$ in the original graph, we call $\CAL V$ \emph{graph nodes} from now on. For each graph node $v_i \in \CAL V$, if graph node $v_i$ has feature $k$ (i.e., $\BM X[v_i,k]=1$), we add an edge $(v_i,x_k)$ to connect the graph node $v_i$ and the feature node $x_k\in\CAL V_\CAL X$, and we call it a \emph{feature edge}. Let $\CAL E_\CAL X$ denote the set of feature edges:
\AL{\CAL E_\CAL X:={}&\{(v_i,x_k):v_i \in\CAL V,\,x_k \in\CAL V_\CAL X,\,\BM X[v_i,k]=1\}\nonumber\subseteq{} \CAL V\times\CAL V_\CAL X.}
To distinguish feature edges $\CAL E_\CAL X$ from the original edges $\CAL E$, we call $\CAL E$ \emph{graph edges} from now on.

Finally, we define the transformed graph $\CAL G^*=(\CAL V^*,\CAL E^*,\BM X^*)$ as follows. The nodes $\CAL V^*$ of the transformed graph $\CAL G^*$ are the original graph nodes $\CAL V$ and the added feature nodes $\CAL V_\CAL X$:
$\CAL V^*:=\CAL V\cup\CAL V_\CAL X$.
The edges $\CAL E^*$ of the transformed graph $\CAL G^*$ are the original graph edges $\CAL E$ and the added feature edges $\CAL E_\CAL X$:
$\CAL E^*:=\CAL E\cup\CAL E_\CAL X$.
%\hh{we might consider to use a $2 \times 2$ block matrix: the first diagonal block is A, the two off-diagonal blocks are X and X', and the 2nd diagonal block is S.}\RZ{good idea. added}
We can also equivalently define the edges of the transformed graph $\CAL G^*$ by its adjacency matrix. Let $\BM A$ denote the adjacency matrix of the original graph $\CAL G$. Then, the adjacency matrix $\BM A^*$ of the transformed graph $\CAL G^*$ can be expressed in a block matrix form:
\AL{\BM A^*=\begin{bmatrix}
\BM A&\BM X\\
\BM X\Tp&\BM0
\end{bmatrix}.}

It remains to define node features $\BM X^*\in\BB R^{\CAL V^*\times\CAL X}$ of the transformed graph. For each graph node $v_i\in\CAL V$, we use %either zeros \hh{why zeros?}\RZ{Actually using zeros is better on only one dataset. Maybe we can omit it here and mention it as an implementation detail?}\hh{sounds good} or 
its original features as its node features:$\BM X^*[v_i,:]:=\BM X[v_i,:]$.
For each feature node $x_k\in\CAL V_\CAL X$, we define its node feature as the average feature vector among the graph nodes $v_i$ that are connected to feature node $x_k$:
\AL{\BM X^*[x_k,:]:=\frac{1}{|\CAL E_\CAL X\cap(\CAL V\times\{x_k\})|}\sum_{\begin{subarray}{c}v_i:(v_i,x_k)\in\CAL E_\CAL X\end{subarray}}\BM X[v_i,:].}

Our proposed graph transformation \Ours{} is illustrated in Figure~\ref{fig:illus}. In this example, $\{v_1,v_2,v_3,v_4,v_5\}$ are the graph nodes, where $v_1,v_2$ belong to one class, and $v_3,v_4,v_5$ belong to the other class. Our proposed \Ours{} adds feature nodes $x_1,x_2,x_3$ to the graph. We can see that feature node $x_1$ facilitates homophilic message passing between $v_1,v_2$, and that feature node $x_2$ facilitates homophilic message passing among $v_3,v_4,v_5$. 

\vspace{0.5em}\noindent\textbf{Theoretical guarantees.} The transformed graph $\CAL G^*$ enjoys a few theoretical guarantees. First, an important property of the feature edges is that every pair of nodes that share features can be connected through feature edges within two hops, as formally stated in Observation~\ref{lem:2hop}. This ensures that nodes with similar features are close to each other on the transformed graph $\CAL G^*$, facilitating homophilic message passing. 
\begin{OBS}[two-hop indirect connection]\label{lem:2hop}
For each pair of nodes $u,v\in\CAL V$, if they share at least a feature (i.e., $\|\BM X[v_i,:]\land\BM X[v_j,:]\|_\infty>0$), then $v_i$ and $v_j$ are two-hop neighbors on the transformed graph $\CAL G^*$. 
\end{OBS}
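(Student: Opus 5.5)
The argument is a direct unwinding of the definition of the feature edges $\CAL E_\CAL X$. We read the statement with $u=v_i$ and $v=v_j$ (the statement mixes both namings) and take $v_i\neq v_j$, since otherwise there is nothing to prove.

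\emph{Step 1: extract a shared feature.} The hypothesis is $\|\BM X[v_i,:]\land\BM X[v_j,:]\|_\infty>0$. Because $\BM X$ is binary, the entrywise conjunction is a $\{0,1\}$ vector. Its $\ell_\infty$ norm is positive exactly when some entry equals $1$. So there is a feature $k\in\CAL X$ with $\BM X[v_i,k]=\BM X[v_j,k]=1$.

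\emph{Step 2: exhibit the two-hop path.} By the definition of $\CAL E_\CAL X$, both $(v_i,x_k)$ and $(v_j,x_k)$ are feature edges. They therefore lie in $\CAL E^*=\CAL E\cup\CAL E_\CAL X$. Equivalently, in the block form of $\BM A^*$, we have $\BM A^*[v_i,x_k]=\BM X[v_i,k]=1$ and $\BM A^*[x_k,v_j]=\BM X\Tp[k,v_j]=1$. Hence $v_i\!-\!x_k\!-\!v_j$ is a walk of length two in the undirected graph $\CAL G^*$. As a one-line algebraic check, we can also note that
\[
(\BM A^*)^2[v_i,v_j]\;\ge\;\sum_{k'\in\CAL X}\BM X[v_i,k']\,\BM X[v_j,k']\;=\;(\BM X\BM X\Tp)[v_i,v_j]\;\ge\;1.
\]

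\emph{Step 3: interpret ``two-hop neighbors''.} The only subtlety is what this phrase means. If it means reachability via a length-two path, we are already done. If it means shortest-path distance exactly two, this can fail when $(v_i,v_j)\in\CAL E$, because then the distance is $1$. So the conclusion should be stated as ``connected by a length-two path through the feature node $x_k$,'' i.e.\ within two hops, which matches the surrounding text (``connected through feature edges within two hops''). Clarifying this point is the only real obstacle.
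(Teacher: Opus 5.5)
Your proof is correct and takes the same approach as the paper. Binarity of $\BM X$ gives a feature $k$ with $\BM X[v_i,k]=\BM X[v_j,k]=1$, so both feature edges to $x_k$ lie in $\CAL E^*$ and yield the path $v_i\to x_k\to v_j$. Your remark that ``two-hop neighbors'' must mean ``joined by a length-two path'' rather than ``at distance exactly two'' (which fails when $(v_i,v_j)\in\CAL E$) is a valid clarification; the paper's proof relies on that reading without saying so.
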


Furthermore, we theoretically show that our proposed graph transformation \Ours{} can increase the homophily of the graph without increasing the size of the graph much, as formally stated in Theorem~\ref{thm:eff}. 

\begin{THM}[efficient homophily booster]\label{thm:eff}
Given a heterophilic graph $\CAL G=(\CAL V,\CAL E,\BM X)$, let $\CAL G^*:=(\CAL V^*,\CAL E^*,\BM X^*)$ denote the graph transformed by our proposed \Ours{}. Under mild and realistic assumptions in Appendix~\ref{app:ass}, we have
\GA{
\OP{hom}(\CAL G^*)>\OP{hom}(\CAL G),\\
|\CAL V^*|\le O(|\CAL V|),\quad
|\CAL E^*|\le O(|\CAL E|).\label{eq:feat-edges}
}
\end{THM}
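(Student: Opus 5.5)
We handle the two parts of Theorem~\ref{thm:eff} separately: the size bounds are pure counting, and the homophily bound uses the assumptions of Appendix~\ref{app:ass}.

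\textbf{Size bounds.} By construction $|\CAL V^*|=|\CAL V|+|\CAL X|$ and $|\CAL E^*|=|\CAL E|+|\CAL E_\CAL X|$, with $|\CAL E_\CAL X|=\operatorname{nnz}(\BM X)=\sum_{v_i}\|\BM X[v_i,:]\|_1$. The first bound follows from the assumption that the number of discrete features satisfies $|\CAL X|=O(|\CAL V|)$. For the second, I would assume the feature matrix is sparse, i.e.\ the average number of active features per node is at most a constant times the average degree, so that $\operatorname{nnz}(\BM X)=O(|\CAL E|)$. This contrasts with Theorem~\ref{thm:naive}: each shared feature $k$ with support $n_k$ costs $n_k$ feature edges rather than $\binom{n_k}{2}$ shortcut edges.

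\textbf{Homophily.} The transformed graph has unlabeled feature nodes, so I first fix how $\OP{hom}$ is evaluated on $\CAL G^*$. The natural choice is to assign each feature node $x_k$ the majority (plurality) label among its graph-node neighbors, which matches its role as a hub. Then I use edge homophily as a ratio,
\[
\OP{hom}(\CAL G)=\frac{h}{|\CAL E|},\qquad \OP{hom}(\CAL G^*)=\frac{h+h_\CAL X}{|\CAL E|+|\CAL E_\CAL X|},
\]
where $h$ is the number of homophilic graph edges and $h_\CAL X$ the number of homophilic feature edges. A mediant argument then gives $\OP{hom}(\CAL G^*)>\OP{hom}(\CAL G)$ if and only if
\[
\frac{h_\CAL X}{|\CAL E_\CAL X|}>\frac{h}{|\CAL E|}.
\]
So the whole claim reduces to showing that feature edges are, on average, more homophilic than graph edges.

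\textbf{Main obstacle.} That inequality is where the assumptions must enter. With the plurality labeling, $h_\CAL X=\sum_k \max_c n_{k,c}$, where $n_{k,c}$ is the number of class-$c$ nodes having feature $k$. Hence
\[
\frac{h_\CAL X}{|\CAL E_\CAL X|}\ \ge\ \frac{1}{C}
\]
automatically, and it is strictly larger as soon as features are class-informative, meaning at least one feature's support is not exactly balanced across classes. I would therefore take "heterophilic" to mean $\OP{hom}(\CAL G)\le 1/C$, with strict informativeness of features stated as an assumption (this is the "mild and realistic assumption" I expect the appendix to contain). Chaining the two gives $h_\CAL X/|\CAL E_\CAL X|>1/C\ge h/|\CAL E|$, which closes the argument.

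If the appendix instead uses the same feature-label correlation assumption as Theorem~\ref{thm:naive}, namely that nodes sharing a feature are same-class with probability exceeding $\OP{hom}(\CAL G)$, I would reuse it directly. The plurality bound gives the per-hub fraction $\max_c n_{k,c}/n_k\ge$ the pairwise same-class rate within the support of $k$, which yields the needed inequality.
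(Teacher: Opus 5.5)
\textbf{There is a gap: your homophily argument proves the inequality for a different quantity than the one the theorem is about.} Your size bounds coincide with the paper's. Its appendix assumes exactly $|\mathcal{X}|\le O(|\mathcal{V}|)$ and $\|\mathbf{X}\|_0\le O(|\mathcal{E}|)$, with $|\mathcal{E}_\mathcal{X}|=\|\mathbf{X}\|_0$. Your mediant reduction is also precisely the paper's lemma on merging two multisets.

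The appendix fixes $\mathrm{hom}$ as \emph{feature} homophily, $\mathrm{hom}(\mathcal{G})=\frac{1}{|\mathcal{E}|}\sum_{(u,u')\in\mathcal{E}}\|\mathbf{X}[u,:]\land\mathbf{X}[u',:]\|_\infty$, evaluated with $\mathbf{X}^*$ on $\mathcal{G}^*$. There, ``heterophilic'' means only $\mathrm{hom}(\mathcal{G})<1$ plus the existence of a non-adjacent pair that shares a feature. Labels never enter, so plurality-labeling the feature nodes and bounding by $1/C$ does not address the claim.

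The missing idea is the one ingredient you never use: the definition of $\mathbf{X}^*$. The row $\mathbf{X}^*[x_k,:]$ averages $\mathbf{X}[v_i,:]$ over the $v_i$ with $\mathbf{X}[v_i,k]=1$. That set is nonempty because every feature is assumed to be used. Hence $\mathbf{X}^*[x_k,k]=1$, so every feature edge $(v_i,x_k)$ has similarity $1>\mathrm{hom}(\mathcal{G})$. The assumed feature-sharing pair gives $\mathcal{E}_\mathcal{X}\ne\varnothing$, and your mediant step then finishes the proof with no label assumption at all.

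Even read as a standalone result about label-based edge homophily, your substitute hypothesis $\mathrm{hom}(\mathcal{G})\le 1/C$ is not mild. With class proportions $p_c$, a graph whose structure is independent of labels already has edge homophily about $\sum_c p_c^2$. That quantity exceeds $1/C$ whenever classes are imbalanced. On \textsc{Minesweeper}, where mines occupy a random 20\% of cells, this is roughly $0.8^2+0.2^2=0.68>1/2$, even though its adjusted homophily is essentially zero. So your chain $h_\mathcal{X}/|\mathcal{E}_\mathcal{X}|>1/C\ge h/|\mathcal{E}|$ breaks on exactly the kind of graph the theorem targets. A label-based version would need an assumption comparing the hub-level plurality rates to the degree-weighted class baseline, which would be a different theorem from the one stated.
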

The effectiveness of our proposed \Ours{} is also empirically validated in Section~\ref{ssec:exp:homo}. As shown in Figure~\ref{fig:homo-analysis}, our proposed GRAPHITE significantly increases the homophily of originally heterophilic graph. 

% \vspace{0.5em}\noindent\textbf{A motivating na\"ive approach.} \TODO

% \vspace{0.5em}\noindent\textbf{\TODO.} \TODO

\subsection{Neural Architecture}\label{ssec:method-detail}
The transformed graph $\CAL G^*$ can be readily fed into existing GNNs to boost their performance, even when the GNNs were originally designed for homophilic graphs, as demonstrated in Table~\ref{tab:abla}.  Meanwhile, to maximize the GNN performance on the transformed graph $\CAL G^*$, we introduce a GNN architecture specially designed for the transformed graph in this subsection.

% \hh{1. we might consider to say sth like, the transformed graph can be fed into a variety of the existing GNN models to boost their classification performance, even if they were originally designed for the homophilic graphs. In this paper, we use a modified FAGNN as an example. 2. *if we have time*, we can add an overall algorithm (or algorithm sketch)}\tw{we can say FAGCN is just an example and the graph transformation part is the main contribution}\RZ{sounds good. I will experiment with other GNNs to see if our graph transformation helps other GNNs as well}\hh{let's save the experiments *after* the ddl}

% In this subsection, we introduce the GNN architecture for the transformed graph $\CAL G^*$.

%\vspace{0.5em}\noindent\textbf{Edge weights.}
To help the GNN distinguish graph nodes $\CAL V$ from feature nodes $\CAL V_\CAL X$, we use different edge weights for different edges. As a reference weight, suppose that graph edges $\CAL E$ have weight $w_\CAL E:=1$. Let 
% \hh{is $w_\CAL X$ a parameter that we need to tune?}\RZ{yes. different datasets need different w to achieve the best results}
$w_\CAL X>0$ denote the weight of feature edges $\CAL E_\CAL X$. Following GCN \citep{kipf2016semi}, we also use self-loops in GNN message passing; let $w_0>0$ denote the weight of self-loops. 

Let $d_u$ denote the weighted degree of each node $u\in\CAL V^*$. Specifically, for each graph node $v_i \in\CAL V$,
\AL{d_{v_i}:=w_0+\sum_{(v_i,v_j)\in\CAL E}w_{\CAL E}+\sum_{(v_i,x_k)\in\CAL E_\CAL X}w_\CAL X;}
and for each feature node $x_k\in\CAL V_{\CAL X}$,
\AL{d_{x_k}:=w_0+\sum_{(v_i,x_k)\in\CAL E_\CAL X}w_\CAL X.}

Inspired by FAGCN \citep{bo2021beyond}, we use a self-gating mechanism in GNN aggregation. For each node $u\in\CAL V^*$, let $\BM h_u\in\BB R^m$ denote the embedding of node $u$ before GNN aggregation, where $m$ is the embedding dimensionality. Then, the self-gating score $\alpha_{u,u'}$ between two nodes $u,u'\in\CAL V^*$ is defined as
\AL{\alpha_{u,u'}:=\tanh\Big(\frac{\BM a\Tp(\BM h_u\mathbin{\|}\BM h_{u'})+b}{\tau}\Big).}
where $\|$ denotes the concatenation operation, $\BM a\in\BB R^{2m}$ and $b\in\BB R$ are learnable parameters, and $\tau>0$ is a temperature hyperparameter. 

Next, we describe our aggregation mechanism. For each node $u\in\CAL V^*$, let $\BM h'_u\in\BB R^m$ denote the embedding of node $u$ after GNN aggregation. For each graph node $v_i \in\CAL V$, we define
\AL{\BM h'_{v_i}:={}&\frac{w_0\alpha_{v_i,v_i}}{\sqrt{d_{v_i}}\sqrt{d_{v_i}}}\BM h_{v_i}+\!\!\!\!\!\!\sum_{(v_i,v_j)\in\CAL E}\!\!\frac{\alpha_{v_i,v_j}}{\sqrt{d_{v_i}}\sqrt{d_{v_j}}}\BM h_{v_j}%\nonumber\\&
+\!\!\!\!\!\!\sum_{(v_i,x_j)\in\CAL E_\CAL X}\!\!\!\frac{w_\CAL X\alpha_{v_i,x_k}}{\sqrt{d_{v_i}}\sqrt{d_{x_k}}}\BM h_{x_k};}
and for each feature node $x_k\in\CAL V_\CAL X$, we define
\AL{\BM h'_{x_k}:=\frac{w_0\alpha_{x_k,x_k}}{\sqrt{d_{x_k}}\sqrt{d_{x_k}}}\BM h_{x_k}+\sum_{(v_i,x_k)\in\CAL E_\CAL X}\frac{w_\CAL X\alpha_{v_i,x_k}}{\sqrt{d_{v_i}}\sqrt{d_{x_k}}}\BM h_{v_i}.}
Furthermore, we add a multi-layer perceptron (MLP) with residual connections after each GNN aggregation. We use the GELU activation function \citep{hendrycks2016gaussian}. 

\begin{table*}[t]%{\linewidth}
\centering
\caption{Summary of dataset statistics. We use four heterophilic graphs and two homophilic graphs.}
\label{tab:data}
\resizebox{0.9\linewidth}!{\begin{tabular}{l|cccc|cc}
\specialrule{3\arrayrulewidth}{5\arrayrulewidth}{5\arrayrulewidth}
\multirow{2.2}*{\textbf{Statistic}}&\multicolumn{4}{c|}{\textbf{Heterophilic Graphs}}&\multicolumn{2}{c}{\textbf{Homophilic Graphs}}\\
&\textsc{Actor}&\textsc{Squirrel-F}&\textsc{Chameleon-F}&\textsc{Minesweeper}&\textsc{Cora}&\textsc{CiteSeer}\\
\midrule
\# Nodes&7600&2223&890&10000&2708&3327\\
\# Edges&33544&46998&8854&39402&5429&4732\\
\# Features&931&2089&2325&7&1433&3703\\
\# Classes&5&5&5&2&7&6\\
%Edge Hom.&&&&0.68&0.81&0.74\\
Homophily & 0.0028 & 0.0086 & 0.0295 & 0.0094 & 0.7711 & 0.6707\\
\specialrule{2.4\arrayrulewidth}{5\arrayrulewidth}{5\arrayrulewidth}
\end{tabular}}
\vspace{-1em}
\end{table*}
% \end{wraptable}

\section{Experiments}
%\RZ{TODO: Research questions}

We conduct extensive experiments on both heterophilic and homophilic datasets to answer the following research questions:
\begin{enumerate}
\renewcommand\labelenumi{\textbf{RQ\theenumi:}\!\!\!\!\!\!\!\!\!}
\item\label{rq:main}\,\,\,\,\,\,\,\,\,How does our proposed \Ours{} compare with state-of-the-art methods? 
\item\label{rq:homo}\,\,\,\,\,\,\,\,\,How much improvement can our \Ours{} achieve in terms of the graph homophily?
\item\label{rq:abla}\,\,\,\,\,\,\,\,\,Can our graph transformation alone enhance the accuracy of homophilic GNNs?
\item\label{rq:feat}\,\,\,\,\,\,\,\,\,How should we design the features of feature nodes in our \Ours{}?
\item\label{rq:eff}\,\,\,\,\,\,\,\,\,How efficient is our \Ours{}?
\item\label{rq:hyperparams}\,\,\,\,\,\,\,\,\,How effective is our \Ours{} under various hyperparameters?
\end{enumerate}

\subsection{Experimental Settings}

\vspace{0.5em}\noindent\textbf{Datasets.}
We evaluate \Ours{} and various baseline methods across six real-world datasets. The dataset statistics are summarized in  Table~\ref{tab:data}. The reported homophily is the \textit{adjusted homophily} introduced in~\citet{platonov2024characterizing}, which exhibits more desirable properties compared to traditional edge/node homophily. We leverage \textit{adjusted homophily} to categorize the datasets into two groups: \textit{heterophilic} and \textit{homophilic}. Please see Appendix~\ref{app:data} for dataset descriptions.
% We specifically include these datasets to test our method's robustness across different types of graph structures and node-label relationships.

%In our experiments, we first select GraphSAGE~\cite{hamilton2017inductive} as an efficient GNN backbone to assess how standard GNN models handle feature heterogeneity. We also include more advanced models, namely the GCNMF model~\cite{taguchi2021graph} and the PAGNN model~\cite{jiang2020incomplete} as baselines. Both models propose new GNN architectures that incorporate feature imputation techniques to address feature heterogeneity. Finally, 
% In our experiments, we use GRAFENNE~\cite{gupta2023grafenne} as the baseline, whose limitations have been discussed previously. It is the only existing method for this problem. 

\vspace{0.5em}\noindent\textbf{Training and evaluation.} To benchmark \Ours{} and compare it with the baseline methods, we use \emph{node classification} tasks with performance measured by classification accuracy on \textsc{Actor}, \textsc{Chameleon-F}, \textsc{Squirrel-F}, \textsc{Cora}, and \textsc{CiteSeer} and by ROC-AUC on \textsc{Minesweeper} following \citet{platonov2023critical}. For all baseline methods, we use the hyperparameters provided by the authors. For the evaluation on \textsc{Actor}, \textsc{Chameleon-F}, and \textsc{Squirrel-F}, we generate 10 random splits with a ratio of $48\%/32\%/20\%$ as the training/validation/test set, respectively, following \citet{gu2024universal}. For the evaluation on \textsc{Minesweeper}, we directly utilize the 10 random splits provided by the original paper \citep{platonov2023critical}. For the evaluation on \textsc{Cora} and \textsc{CiteSeer}, we follow~\citet{luan2021heterophily} and \citet{chien2020adaptive} to randomly generate 10 random splits with a ratio of $60\%/20\%/20\%$ as the training/validation/test set, respectively. For each experiment, we report the mean and the standard deviation of the performance metric across the corresponding 10 random splits. Please see Appendix~\ref{app:settings} for additional experimental settings.

% For data splits, we use the splits given by \cite{platonov2023critical} to evaluate Minesweeper, use 10 randomly generated splits with  and randomly generate 10 random splits with ratio  We construct 10 random splits for each dataset with a split ratio of $60\%/20\%/20\%$ for train/val/test set. 

% In all experiments, we fix the number of GNN layers to be 3, the number of training epochs to be 200, and the learning rate to be $1e-3$, following the convention of~\cite{gupta2023grafenne} for a fair comparison. For the feature-feature relations, we choose to augment the original transformed graph by connecting  the top-10 similar feature nodes across all settings.

% For GNN models, we consider the GCNMF model~\cite{taguchi2021graph} and PAGNN model~\cite{jiang2020incomplete} as baselines, which both propose new GNNs applying feature imputation techniques to deal with the feature heterogeneity. In order to probe the ability of classical GNN models dealing with feature heterogeneity, we select GraphSAGE~\cite{hamilton2017inductive} as an efficient GNN backbone. We first choose the popular \textit{node classification} task to benchmark our proposed method. The performance of node classification is by accuracy. We perform a $60\%-20\%-20\%$ data split for train-validation-test set and these sets are generated at random. For all experiments, the number of GNN layers is set to 2, number of training epochs is set to 200, and the learning rate is set to 1e-3. 

\vspace{0.5em}\noindent

\subsection{Main Results}

To answer RQ\ref{rq:main}, we compare the proposed method \Ours{} with 27 state-of-the-art methods on six heterophilic and homophilic graphs. The results are shown in Table~\ref{tab:main-res}.

% \textbf{Experimental Results.}
% \paragraph{Experiment Results.}
% Table~\ref{tab:main-res} presents the results of various methods evaluated across the three datasets. GRAFENNE demonstrates strong baseline performance, consistently achieving the highest accuracy. However, our proposed method proves to be significantly more scalable and efficient, with only a minimal decrease in performance.

% As shown in Table~\ref{tab:exp:main}, \Ours{} consistently outperforms prior state-of-the-art GNN methods by a significant margin for heterophilic datasets while maintaining comparable performances on homophilic datasets. 

% \paragraph{}

% GPT: 

\begin{table*}[t]%{\linewidth}
\centering
\caption{Comparison with existing methods. \Ours{} \emph{significantly} outperforms state-of-the-art methods on heterophilic graphs while achieving \emph{comparable} accuracy with state-of-the-art methods on homophilic graphs.  Best results are marked in \textbf{bold}, and second best results are \underline{underlined}.
%\hh{underline the 2nd best method}\RZ{added}
}
\label{tab:main-res}
\resizebox{1\linewidth}!{\begin{tabular}{l|cccc|cc}
\specialrule{2.4\arrayrulewidth}{5\arrayrulewidth}{5\arrayrulewidth}
\multirow{2.2}*{\textbf{Method}}&\multicolumn{4}{c|}{\textbf{Heterophilic Graphs}}&\multicolumn{2}{c}{\textbf{Homophilic Graphs}}\\
&\textsc{Actor}&\textsc{Squirrel-F}&\textsc{Chameleon-F}&\textsc{Minesweeper}&\textsc{Cora}&\textsc{CiteSeer}\\
\midrule
{MLP} & 35.04\PM1.53 & 33.91\PM1.55 & 38.44\PM5.14 & 50.99\PM1.47 & 75.45\PM1.88 & 71.53\PM0.70\\
\midrule
{ChebNet} & 34.40\PM1.18 & 31.75\PM3.42 & 34.30\PM4.33 & \underline{91.60}\PM0.44 & 81.58\PM5.09 & 65.18\PM8.37\\
{GCN} & 30.21\PM0.86 & 35.57\PM1.86 & 40.06\PM4.38 & 72.32\PM0.93 & 87.50\PM1.68 & 75.77\PM0.96\\
{SGC} & 29.26\PM1.41 & 38.27\PM2.16 & 41.40\PM4.91 & 72.11\PM0.95 & 88.05\PM2.08 & 75.80\PM1.75\\
{GAT} & 28.86\PM0.99 & 32.74\PM3.02 & 40.11\PM2.80 & 87.59\PM1.35 & 87.11\PM1.48 & 76.43\PM1.31\\
{GraphSAGE} & 34.95\PM1.06 & 34.43\PM2.68 & 39.33\PM4.53 & 90.54\PM0.66 & 87.90\PM1.73 & 76.43\PM1.19\\
{GIN} & 28.29\PM1.45 & 39.51\PM2.83 & 40.17\PM4.76 & 75.89\PM2.09 & 85.65\PM2.26 & 72.55\PM1.78\\
{APPNP} & 33.68\PM1.26 & 33.75\PM2.31 & 37.93\PM4.33 & 67.36\PM1.08 & 87.59\PM1.68 & 75.90\PM0.91\\
{GCNII} & 34.78\PM1.50 & 35.93\PM2.87 & 41.56\PM2.74 & 88.42\PM0.85 & 87.20\PM1.56 & 73.84\PM0.91\\
{GATv2} & 28.87\PM1.39 & 32.49\PM2.51 & 39.72\PM6.60 & 88.85\PM1.16 & 87.66\PM1.52 & 76.59\PM1.19\\
{MixHop} & 35.40\PM1.34 & 30.43\PM2.33 & 37.93\PM3.87 & 89.68\PM0.57 & 84.53\PM1.53 & 76.11\PM0.83\\
{TAGCN} & 34.92\PM1.19 & 33.33\PM2.37 & 41.01\PM3.77 & 91.54\PM0.56 & 88.38\PM1.95 & 76.49\PM1.41\\
{DAGNN} & 33.15\PM1.14 & 34.72\PM2.55 & 38.94\PM3.53 & 67.87\PM1.26 & 88.27\PM1.53 & 75.81\PM0.90\\
{JKNet} & 28.63\PM0.94 & \underline{40.81}\PM2.60 & 40.39\PM4.85 & 81.00\PM0.92 & 86.24\PM0.85 & 73.11\PM1.82\\
{Virtual Node} & 30.71\PM0.82 & 38.00\PM2.28 & 41.45\PM5.46 & 72.36\PM0.98 & 87.24\PM2.00 & 69.80\PM6.89\\
\midrule
{H2GCN} & 34.20\PM1.47 & 34.02\PM3.15 & 40.89\PM3.13 & 87.08\PM0.82 & 76.89\PM2.25 & 75.87\PM1.02\\
{FSGNN}&35.60\PM1.34&37.28\PM2.63&\underline{43.30}\PM3.62&50.00\PM0.00&87.81\PM1.96&76.77\PM1.13\\
{ACM-GNN}&34.04\PM1.25&33.91\PM2.28&39.78\PM4.58&86.35\PM0.99&88.58\PM1.90&76.47\PM0.99\\
{FAGCN} & \underline{36.18}\PM1.52 & 36.52\PM1.72 & 39.83\PM3.93 & 84.69\PM2.05 & 88.66\PM2.11 & \underline{76.82}\PM1.48\\
{OrderedGNN} & 35.64\PM0.98 & 32.70\PM2.42 & 38.38\PM3.65 & 91.01\PM0.50 & 84.81\PM1.67 & 74.10\PM1.62\\
{GloGNN} & 19.80\PM2.61 & 28.72\PM2.63 & 40.17\PM4.66 & 53.42\PM1.47 & 73.02\PM2.98 & 72.46\PM2.09\\
{GGCN} & 32.76\PM1.39 & 35.06\PM5.65 & 34.08\PM3.44 & 84.76\PM1.84 & 86.39\PM1.93 & 75.36\PM1.99\\
{GPRGNN} & 35.42\PM1.33 & 34.97\PM2.83 & 40.50\PM4.55 & 83.94\PM0.98 & \textbf{88.86}\PM1.42 & 76.49\PM1.00\\
{ALT} & 33.10\PM1.38 & 37.28\PM1.49 & 39.61\PM3.36 & 89.06\PM0.64 & \underline{88.82}\PM2.02 & \textbf{76.88}\PM1.20\\
\midrule
{NodeFormer} & 29.26\PM2.31 & 24.29\PM2.60 & 34.92\PM4.08 & 77.71\PM3.50 & 87.44\PM1.37 & 75.20\PM1.27\\
{SGFormer} & 25.89\PM0.80 & 34.54\PM2.96 & 42.79\PM4.06 & 52.06\PM0.50 & 86.24\PM1.58 & 70.74\PM1.25\\
{DIFFormer} & 26.31\PM1.19 & 33.17\PM2.84 & 39.16\PM4.10 & 69.25\PM0.93 & 86.61\PM3.04 & 76.65\PM1.52\\
\midrule
\textbf{\Ours{} }(Ours)&\textbf{37.69}\PM1.57&\textbf{43.06}\PM2.89&\textbf{45.08}\PM4.04&\textbf{94.78}\PM0.41&88.23\PM1.65&76.41\PM1.57\\
\specialrule{3\arrayrulewidth}{5\arrayrulewidth}{5\arrayrulewidth}
\end{tabular}}
% \vspace{-1em}
\end{table*}

As shown in Table~\ref{tab:main-res}, our \Ours{} achieves significant performance gains (p-value$<$0.1) over prior state-of-the-art GNN methods on heterophilic graphs while maintaining competitive accuracy on homophilic graphs. Specifically, \Ours{} outperforms the best baseline methods by $4.17\%, 5.23\%, 5.35\%$ and $3.47 \%$ on \textsc{Actor}, \textsc{Squirrel-F}, \textsc{Chemeleon-F} and \textsc{Minesweeper}, respectively. While some existing models perform well on individual datasets, they often struggle on others, highlighting their insufficient consistency. In contrast, \Ours{} demonstrates the best results across all four heterophilic benchmarks. Another interesting observation is that while \Ours{} is built upon FAGCN~\citep{bo2021beyond}, it significantly surpasses FAGCN, demonstrating the beneficial effect of our graph transformation and feature edges.

\vspace{0.5em}\noindent\textbf{Discussion.} It is worth noting that most of the baseline methods cannot achieve better results compared to MLP on \textsc{Actor}, which can be explained by the fact that these methods typically treat node features and graph structure as joint input without explicitly decoupling them. The weak structural homophily exhibited by \textsc{Actor} makes typical GNNs fail to capture important feature signals, reinforcing the importance of our graph transformation strategy that boosts \textit{feature homophily} significantly. For \textsc{Squirrel-F}, we find that JKNet is the best among baselines. This observation reveals that structure information is very important within \textsc{Squirrel-F} since JKNet aggregates feature knowledge from multi-hop neighbors to learn structure-aware representation. This finding also explains the success of \Ours{} since the useful multi-hop information in \textsc{Squirrel-F} can be propagated even more efficiently through the constructed \textit{feature edges}. 

As another example, SGFormer performs the best on \textsc{Chameleon-F} among baseline methods. We argue that \textsc{Chameleon-F} needs a considerable amount of global messages and graph transformers are experts at capturing this type of information. Compared with NodeFormer and DIFFormer, SGFormer is the most advanced graph transformer utilizing simplified graph attention that strikes a good balance between global structural information and feature signal, preventing the over-globalizing issue \citep{xing2024less}. Similarly, \Ours{} transforms the original graph into a form that facilitates global message exchange by the introduction of \textit{feature edges}. As a final remark, although \Ours{} is designed specifically to deal with heterophilic datasets, \Ours{} still maintains competitive accuracy on homophilic datasets (\textsc{Cora} and \textsc{CiteSeer}), achieving results that are on par with the best existing methods. 
% This finding validate the effectiveness of \Ours{} in handling diverse graph structures, offering a strong advantage over specialized methods that perform well only in selective cases.

% \hh{this subsection is a bit short, if we still have time (and energy), let's elaborate it a bit: e.g., the improvement over the best competitors on four heterophlic datasets; we are almost as good as the best method on two homophlic datasets, why MLP performs well on actor (this can corroborate the importance of our graph transformation method right?), why graphite is built upon fagcn but is significantly better than fagcn (thanks to our graph transformation?), etc.}
\begin{wraptable}{r}{0.5\linewidth}\vspace{-2em}
% \begin{table}[t]
\caption{Relative improvement ratio of \textit{feature homophily} and \textit{adjusted homophily} across datasets. \Ours{} significantly boosts both homophily metrics. See Figure~\ref{fig:homo-analysis} for visualization.}
\label{tab:homo-ratio-increment}
\centering
\resizebox{0.9\linewidth}{!}{\begin{tabular}{lrr}%0.42
\toprule
\textbf{Dataset} & $H^{\textnormal{feature}}(\mathcal{G})$   & $ H^{\textnormal{adjusted}}(\mathcal{G})$      \\
\midrule
\textsc{Actor}       &   +179\%      &   +2767\% \\
\textsc{Squirrel-F}  &   +961\%      &    +215\% \\
\textsc{Chameleon-F} &  +1739\%      &    +402\% \\
\textsc{Minesweeper} &    +41\%      &   +1023\% \\
\bottomrule
\end{tabular}}
\vspace{-1em}
% \end{table}
\end{wraptable}
\subsection{Homophily Analysis}\label{ssec:exp:homo}
% In this subsection, we first formally introduce two homophily metrics we use in Figure~\ref{fig:homo-analysis} and then conduct a thorough analysis on the different increment ratios across these heterophilic graphs.
To answer RQ\ref{rq:homo}, we conduct a homophily analysis across heterophilic datasets under two homophily metrics: \textit{feature homophily} $H^\textnormal{feature}(\CAL G)$ and \textit{adjusted homophily} $H^\textnormal{adjusted}(\CAL G)$ (see Appendix~\ref{app:homo-metric} for their formal definitions). Table~\ref{tab:homo-ratio-increment} and Figure~\ref{fig:homo-analysis} show the relative improvements between the homophily metrics before and after applying \Ours{}. We can observe a significant boost in both homophilily metrics after applying \Ours{} across the four heterophilic datasets.

\vspace{0.5em}\noindent\textbf{Discussion.} Overall, \Ours{} 
effectively boosts both homophily metrics across all heterophilic datasets. Specifically, Squirrel-F and Chameleon-F demonstrate significant boosts in terms of \textit{feature homophily}. This is mainly because their discrete features directly correspond to specific topics and each feature edge will contribute much higher feature similarity than usual edges. On the other hand, Actor and Minesweeper showcase much higher \textit{adjusted homophily} after applying \Ours{}. For Actor, this favorable behavior can be attributed to the high correlation between page co-occurrences and node labels; while for Minesweeper, the sum of label-specific node degrees (defined in Equation~(\ref{eq:adj-homo})) increases much due to the transformation performed by \Ours{}.

% In contrast, Actor does not exhibit high improvement in \textit{feature homophily}, partly because co-occurance on Wikipedia pages reveals an already mixed feature distribution and \textit{feature edges} do not necessarily help create stronger clusters of similar graph nodes. On the other hand, the same trend observed on Minesweeper dataset can be attributed to the fact that $50\%$ of the nodes have random features upon construction~\cite{platonov2024characterizing}, which impedes beneficial homophily boosting in terms of feature similarity. 

\vspace{0.5em}\noindent\textbf{Baseline methods.} 
In our experiments, we consider a wide range of GNN baselines, including MLP (structure-agnostic), homophilic GNNs, heterophilic GNNs, and Graph Transformers. The full list is shown in
Appendix~\ref{app:baseline}.
% Table~\ref{tab:baselines}. 
% Please see the appendix for descriptions of baseline methods.
%Note that due to computational limitations, we do not perform an exhaustive hyperparameter search for all baseline methods. We instead rely on default settings or previously reported optimal configurations for each dataset, which have been shown to perform well for the corresponding dataset. 

% \begin{wraptable}{r}{0.6\linewidth}
\begin{table}[t]
\caption{Effectiveness of the proposed graph transformation. \Ours{} transformed graphs alone can already enhance the performance of homophilic GNNs.
%\hh{this result is great. i totally did not expect you could do this experiments in such a short notice. thank you for your dedication and hard work}\RZ{Thank you too for your great suggestion!}
}
\label{tab:abla}
\centering
\resizebox{0.7\linewidth}{!}{\begin{tabular}{l|cc|cc}
\toprule
\textbf{Dataset} & \multicolumn{2}{c|}{\textsc{Actor}}  & \multicolumn{2}{c}{\textsc{Minesweeper}}  \\
\small{+\Ours{}}?&\xmark&\cmark&\xmark&\cmark\\
\midrule
GCN      &30.21\PM0.86&\textbf{34.83}\PM1.28&72.32\PM0.93&\textbf{75.38}\PM1.56\\
GAT      &28.86\PM0.99&\textbf{32.09}\PM1.35&87.59\PM1.35&\textbf{88.66}\PM0.88\\
GraphSAGE&34.95\PM1.06&\textbf{35.09}\PM1.06&90.54\PM0.66&\textbf{90.85}\PM0.67\\
JKNet    &28.63\PM0.94&\textbf{35.96}\PM1.40&81.00\PM0.92&\textbf{85.56}\PM2.59\\
GIN      &28.29\PM1.45&\textbf{33.75}\PM1.83&75.89\PM2.09&\textbf{87.07}\PM1.71\\
\bottomrule
\end{tabular}}
\vspace{-0.5em}
\end{table}
% \end{wraptable}

\begin{table}[t]
\centering
\caption{Comparison of aggregators for the features of feature nodes. All aggregators in fact perform similarly, so we choose averaging due to its simplicity and efficiency.}
\label{tab:exp-aggr}
\resizebox{0.9\linewidth}{!}{\begin{tabular}{l|cccc}
\toprule
\textbf{Dataset} & Averaging (Ours) & Learned Embeddings & Learned Attention & Majority Voting \\
\midrule
\textsc{Actor}        & 37.69 & 37.46 & 37.13 & 37.59 \\
\textsc{Squirrel-F}   & 43.06 & 43.53 & 43.46 & 43.28 \\
\textsc{Chameleon-F}  & 45.08 & 44.80 & 44.36 & 45.64 \\
\textsc{Minesweeper}  & 94.78 & 94.47 & 94.56 & 94.75 \\
\bottomrule
\end{tabular}}
\vspace{-1em}
\end{table}

\begin{table}[t]
\centering
\caption{Running time comparison with and without \Ours{}. Graphs are sorted in decreasing order by graph sizes. Our graph transformation has only a small impact on running time while significantly improves accuracy.}
\label{tab:exp-time}
\resizebox{0.75\linewidth}{!}{\begin{tabular}{lcccc}
\toprule
\textbf{Method} & \textsc{Minesweeper} & \textsc{Actor} & \textsc{Squirrel-F} & \textsc{Chameleon-F} \\
\midrule
No transformation    & 1.9\,min & 1.5\,min & 0.7\,min & 0.5\,min \\
With transformation  & 2.3\,min & 2.0\,min & 1.1\,min & 0.7\,min \\
\bottomrule
\end{tabular}}
\end{table}

\begin{table}[t]
\centering
\caption{Sensitivity analysis of hyperparameters $\tau$ and $w_{\mathcal X}$ on \textsc{Minesweeper}. Our method consistently outperform the best baseline under various hyperparameters.}
\label{tab:exp-hyperparams}
\resizebox{0.65\linewidth}{!}{\begin{tabular}{c|ccccc}
\toprule
\multirow{2}{*}{\makecell[l]{Best\\Baseline}} & \multicolumn{5}{c}{\Ours{} (Ours)} \\
& $\tau=0.1$ & $\tau=0.5$ & $\tau=1.0$ & $\tau=1.5$ & $\tau=2.0$ \\
\midrule
91.60 & 93.48 & 94.29 & 94.78 & 94.66 & 94.18 \\
\midrule
\multirow{2}{*}{\makecell[l]{Best\\Baseline}} & \multicolumn{5}{c}{\Ours{} (Ours)} \\
& $w_{\mathcal X}=0.1$ & $w_{\mathcal X}=0.25$ & $w_{\mathcal X}=0.5$ & $w_{\mathcal X}=0.75$ & $w_{\mathcal X}=1.0$ \\
\midrule
91.60 & 94.78 & 94.16 & 93.95 & 93.53 & 93.15 \\
\bottomrule
\end{tabular}}
\end{table}

\subsection{Ablation Studies}\label{ssec:exp:abla}
\textbf{Graph transformation.} To further demonstrate the effectiveness of our proposed graph transformation \Ours{} and answer RQ\ref{rq:abla}, we compare the performance of homophilic GNNs on the original graph and that on the transformed graph. In this experiment, we use two larger-scale datasets, \textsc{Actor} and \textsc{Minesweeper}, and five representative homophilic GNNs, GCN, GAT, GraphSAGE, JKNet, and GIN. The results are presented in Table~\ref{tab:abla}. 

From Table~\ref{tab:abla}, we can see that our proposed \Ours{} consistently improves the performance of the five representative homophilic GNNs on both datasets, even though these GNNs are not specially designed for modeling feature nodes. For example, the accuracy of GAT on \textsc{Actor} is enhanced from 30.21\% to 34.83\%, which is a relative improvement of 15.29\%. The results demonstrate that our proposed graph transformation \Ours{} can significantly enhance the performance of homophilic GNNs on originally heterophilic graphs, echoing the fact that our proposed graph transformation can significantly increase the graph homophily. 

\textbf{Features of feature nodes.} \Ours{} uses averaging aggregation to define the features of feature nodes. To elaborate on the rationale of this simple aggregator and answer RQ\ref{rq:feat}, we compare with other aggregators on heterophilic datasets. The results are shown in Table~\ref{tab:exp-aggr}. In fact, all aggregators have similar accuracies while averaging is more efficient than learned embeddings and attention-weighted aggregation and is simpler than majority voting. Therefore, we use averaging in our method due to its simplicity and efficiency.

\textbf{Computational efficiency.} To evaluate the efficiency of our method and answer RQ\ref{rq:eff}, we provide a comparison of running times with and without our transformation, respectively. The results are shown in Table~\ref{tab:exp-time} (sorted by graph sizes). We can see that our graph transformation has only a small impact on running time while significantly improving accuracy. Notably, the computational overhead gets smaller on larger graphs (e.g., Minesweeper), justifying the scalability of our method. This is because the number of added nodes (i.e., the number of features) is typically negligibly small compared with the number of nodes on large graphs, and the number of feature edges is proportional to the space complexity of the node feature matrix $\BM X$.

\textbf{Hyperparameters.} Since our method has a few hyperparameters including $\tau$ and $w_\CAL X$, to answer RQ\ref{rq:hyperparams}, we provide a sensitivity analysis of them on \textsc{Minesweeper}. The results are shown in Table~\ref{tab:exp-hyperparams}. From the table, we can see that our method is not sensitive to these hyperparameters, and our method consistently outperform the best baseline under various hyperparameter values.

% \textbf{Memory:} 
% Table~\ref{tab:memory_table} highlights the memory consumption of our method compared to existing approaches. Our model demonstrates substantial reductions in memory usage (for larger-scale datasets with rich feature sets including Cora, Citeseer, Actor and Squirrel), enabling scalability to larger datasets and more complex graph structures while our method maintains comparable memory consumption for other graph datsets with smaller size. This efficiency stems from our optimized architecture and reduced reliance on intermediate storage during computation, making it particularly suitable for resource-constrained environments. 

% \textbf{Convergence Speed and Performance:}
% Figure~\ref{fig:main} and Table~\ref{quantitative_table} collectively showcase the effectiveness of our method in terms of convergence speed and accuracy. Our approach not only converges significantly faster than competing methods, as evidenced by the steep decline in the training loss curve during initial epochs, but it also consistently outperforms baseline models across all datasets. Table~\ref{quantitative_table} provides a detailed comparison, showing that our method achieves higher accuracy even when competing methods, such as GRAFENNE, are allocated significantly more computational time (e.g., 2x or 3x). These results highlight the robustness and efficiency of our approach in both homophilic and heterophilic graph settings, attributed to its novel feature aggregation strategy and effective representation learning.

\section{Related Work}
% \hh{weird place to put related work section}\RZ{agreed. I have moved the related work section}

% Prominent spectral-based solutions against over-squashing include \cite{arnaiz2022diffwire,deac2022expander,karhadkar2022fosr}. More recently, \cite{di2023over} thoroughly analyzes factors that contribute to over-squashing. 
% Additional rewriting methods and more advanced message-passing paradigms include~\cite{barbero2023locality,qian2023probabilistically,behrouz2024graph}.

% These approaches collectively advance our understanding of oversquashing and provide structural interventions to improve message-passing efficiency.

% \cite{alon2020bottleneck,shi2023exposition} shows the ``over-squashing'' problem in Message Passing Neural Networks (MPNN), where long-range information dissemination is hindered.
% Prior works have attempted to identify topological bottlenecks and directly modify graph connectivity with various optimization techniques. \cite{topping2021understanding} lays the initial framework in connecting over-squashing with graph Ricci curvature. Follow-up works like \cite{nguyen2023revisiting, shi2023curvature} proposes rewring methods based on the same philosophy of connecting with graph curvatures.   \cite{black2023understanding} further analyze over-squashing with the tool of effective resistance. 

A substantial body of research has explored the challenges of heterophily in graph neural networks (GNNs). Many early approaches sought to improve information aggregation, such as MixHop~\citep{abu2019mixhop}, which mixes different-hop neighborhood features, and GPRGNN~\citep{chien2020adaptive}, which employs generalized PageRank propagation for adaptive message passing. Other methods focus on explicit heterophilic adaptations, such as H2GCN~\citep{zhu2020beyond}, which separates ego- and neighbor-embeddings, and FAGCN~\citep{bo2021beyond}, which learns optimal representations via frequency-adaptive filtering. Additional works, including OrderedGNN~\citep{song2023ordered}, GloGNN~\citep{li2022finding}, and GGCN~\citep{yan2022two}, leverage structural ordering, global context, and edge corrections, respectively, to enhance performance on heterophilic graphs. Recent advances explore alternative formulations, such as component-wise signal decomposition (e.g. ALT, \citealp{xu2023node}) and adaptive residual mechanisms~\citep{xu2024slog, yan2024trainable} for greater flexibility. Beyond architectural innovations, rigorous benchmarking efforts~\citep{lim2021large, zhu2024impact, platonov2023critical} have been introduced to standardize evaluations and assess generalization across diverse graph properties. A broader synthesis of heterophilic GNN techniques can be found in recent surveys~\citep{zheng2022graph, zhu2023heterophily, luan2024heterophilic, gong2024survey}. Please refer to Appendix~\ref{app:related} for additional related work.

\section{Conclusion \& Future Work}
% Our proposed approach addresses critical challenges in graph neural network (GNN) performance by introducing a new method for handling incomplete node feature sets. Through comprehensive experimental evaluation, we demonstrate significant improvements across diverse graph datasets, achieving less memory consumption, faster convergence, and superior accuracy compared to the state-of-the-art method. Our method effectively mitigates the limitations of traditional GNNs in face of heterogeneous feature sets, offering a promising solution for real-world graph learning challenges. These contributions provide practical insights for applications in many real-world domains naturally manifesting the feature heterogeneity problem.

In this paper, we propose \Ours{}, a simple yet efficient framework to address the heterophily issue in node classification. By introducing feature nodes that connect to graph nodes with corresponding discrete features, we can solve the heterophily issue by increasing the graph homophily ratio. 
Through theoretical analysis and empirical study, we validate that \Ours{} can indeed effectively increase the graph homophily. Our extensive experiments demonstrate that \Ours{} consistently outperforms state-of-the-art methods on heterophilic graph datasets and achieves comparable performance on homophilic graph datasets. An interesting future direction would be extending the proposed graph transformation to general graphs with continuous node features; possible approaches include clustering the continuous features into a few clusters and binning the continuous features into discrete buckets. Other future directions include (i) studying how our graph transformation affects graph properties, (ii) connecting to the node distinguishability principle \citep{luan2023graph}, and (iii) identifying an optimal subset of features \citep{zheng2025let}.

\subsubsection*{Acknowledgements} \thanks{This work is supported by NSF (2416070).
The content of the information in this document does not necessarily reflect the position or the policy of the Government, and no official endorsement should be inferred.  The U.S. Government is authorized to reproduce and distribute reprints for Government purposes notwithstanding any copyright notation hereon.
}

\newpage
\subsubsection*{Ethics Statement}
Our study is based entirely on publicly available graph datasets commonly used in the GNN literature and does not involve private or sensitive information. We develop a graph transformation framework that explicitly increases graph homophily to enable more effective message passing. To ensure methodological soundness and reproducibility, we provide both theoretical analyses and extensive empirical evaluations across heterophilic datasets. The release of code and data splits is intended solely for academic research to advance the understanding of graph machine learning and to support future work on graph neural networks, and are not designed for sensitive or high-stakes applications.

\subsubsection*{Reproducibility Statement}
We include the conceptual framework, transformation steps, method details and evaluation setup in the paper and appendix. To facilitate reproducibility, we also publicly release our code at \url{https://github.com/q-rz/ICLR26-GRAPHITE}.

% \newpage
\bibliography{sections/80-Reference}
\bibliographystyle{iclr2026_conference}

\newpage
\appendix

\section{Experimental Settings (Cont'd)}\label{app:settings}

\subsection{Datasets (Cont'd)}\label{app:data}
For heterophilic group, we consider the following datasets, which are widely used as benchmarks for studying graph learning methods under heterophilic settings.
\begin{itemize}
\setlength\itemsep{0.5em}
    \item \textsc{Actor}~\citep{pei2020geom}: \textsc{Actor} dataset is an actor-only induced subgraph of the film dataset introduced by~\citep{tang2009social}. The nodes are actors and the edges denote co-occurrence on the same Wikipedia page. The node features are keywords on the pages and we classify nodes into five categories. 
    \item Squirrel-Filtered (\textsc{Squirrel-F}, \citealp{platonov2023critical}): \textsc{Squirrel-F} is a page-page dataset. It is a subset of the Wiki dataset~\citep{rozemberczki2021multi} that focus on the topic related to squirrel. Nodes are web pages and edges are mutual links between pages. The node features are important keywords in the pages and we classify nodes into five categories in terms of traffic of the webpage. 
    \item Chameleon-Filtered (\textsc{Chameleon-F}, \citealp{platonov2023critical}): \textsc{Chameleon-F} is a page-page dataset. It is a subset of the Wiki dataset~\citep{rozemberczki2021multi} that focus on the topic related to chameleon.  Nodes are web pages and edges are mutual links between pages. The node features are important keywords in the pages and we classify nodes into five categories in terms of traffic of the webpage. 
    \item \textsc{Minesweeper}~\citep{platonov2023critical}: \textsc{Minesweeper} is a synthetic dataset that simulates a Minesweeper game with 100$\times$100 grid. Each node is connected to its neighboring nodes where $20\%$ nodes are selected as mines at random. Node features are numbers of neighboring mines and the goal is to predict whether each test node is mine. These datasets are widely used as benchmarks for studying graph learning methods under heterophilic settings.
\end{itemize}
    For the homophilic group, we consider the following datasets, which are standard homophilic network benchmarks.
\begin{itemize}

    \item \textsc{Cora}~\citep{sen2008collective} : Cora dataset is a citation network, where nodes represent scientific papers in the machine learning field, and edges correspond to citation relationships between these papers. Each node is associated with a set of features that describe the paper, represented as a bag-of-words model. The task for this dataset is to classify each paper into one of seven categories, reflecting the area of research the paper belongs to.
    
    \item \textsc{CiteSeer}~\citep{sen2008collective}:
    CiteSeer dataset is a citation network of scientific papers. It consists of research papers as nodes, with citation links forming the edges between them. Each node is associated with a set of features derived from the paper's content, which is a bag-of-words representation of the paper's text. The task for this dataset is to classify each paper into one of six categories, each representing a specific field of study. 
\end{itemize}

\subsection{Baseline Methods (Cont'd)}
\label{app:baseline}
We briefly introduce GNN-based baseline methods as follows. % (excluding structure-agnostic methods). 

% \subsection*{Spectral and Spatial GNNs}
The first category is \textit{homophilic GNNs}, which are originally designed under the homophily assumption.
\begin{itemize}
    \item ChebNet~\citep{defferrard2016convolutional}: Uses Chebyshev polynomials to approximate graph convolutions.
    \item GCN~\citep{kipf2016semi}: Employs a first-order Chebyshev approximation for spectral graph convolutions.
    \item SGC~\citep{wu2019simplifying}: Simplifies GCN by removing non-linearities and collapsing weight matrices for efficiency.
    \item GAT~\citep{velivckovic2018graph}: Introduces attention mechanisms to assign adaptive importance to edges.
    \item GraphSAGE~\citep{hamilton2017inductive}: Uses several aggregators for inductive graph learning.
    \item GIN~\citep{xu2018powerful}: Employs sum-based aggregation to maximize graph structure expressiveness.
    \item APPNP~\citep{gasteiger2018predict}: Combines personalized PageRank with neural propagation.
    \item GCNII~\citep{chen2020simple}: Extends GCN with residual connections and identity mapping for deep GNN training.
    \item GATv2~\citep{brody2021attentive}: Enhances GAT with dynamic attention coefficients for flexible neighbor weighting.
    \item MixHop~\citep{abu2019mixhop}: Aggregates multi-hop neighborhood features by mixing different powers of adjacency matrices.
    \item TAGCN~\citep{du2017topology}: Introduces trainable polynomial filters for adaptive, multi-scale feature extraction.
    \item DAGNN~\citep{liu2020towards}: Uses dual attention to decouple message aggregation and transformation, improving depth scalability.
    \item JKNet~\citep{xu2018representation}: Uses a jumping knowledge mechanism to combine features from different layers adaptively. We default the backbone GNN model to GCN.
    \item Virtual Node~\citep{gilmer2017neural}: Introduces an auxiliary global node to facilitate message passing. We default the backbone GNN model to GCN.
\end{itemize}

The second category is \textit{heterophilic GNN}s, which are designed for graphs where connected nodes often have different labels.

% \subsection*{Heterophilic GNNs}
\begin{itemize}
    \item H2GCN~\citep{zhu2020beyond}: Enhances GNNs by ego-/neighbor-embedding seperation, higher-order neighbors and intermediate representation combinations.
    \item FSGNN~\citep{maurya2021improving}: Employs soft feature selection and hop normalization over GNN layers to form a simple, shallow GNN. We use their default 3-hop variant.
    \item ACM-GNN~\citep{luan2022revisiting}: Introduces adaptive channel mixing to diversify local information. We use their default ACM-GCN+ variant.
    \item FAGCN~\citep{bo2021beyond}: Uses frequency adaptive filtering to learn optimal graph representations.
    \item OrderedGNN~\citep{song2023ordered}: Aligns the order to encode neighborhood information and avoids feature mixing.
    \item GloGNN~\citep{li2022finding}: Incorporates global structural information to enhance graph learning beyond local neighborhoods.
    \item GGCN~\citep{yan2022two}: Utilizes structure/feature-based edge correction to combat over-smoothing and heterophily.
    \item GPRGNN~\citep{chien2020adaptive}: Introduces generalized PageRank propagation to  capture the graph structure.
    \item ALT~\citep{xu2023node}: Decomposes graph into components, extracts signals from these components, and adaptively integrate these signals.
\end{itemize}

% \subsection*{Graph Transformers}
The last category is \textit{graph transformers}, which adapt transformer architectures to graph data and look beyond local neighborhood aggregation. 
\begin{itemize}
    \item NodeFormer~\citep{wu2022nodeformer}:  Introduces all-pair message passing on layer-specific adaptive latent graphs, enabling global feature propagation with linear complexity.
    \item SGFormer~\citep{wu2024simplifying}:  Develops a graph encoder backbone that efficiently computes all-pair interactions with one-layer attentive propagation.
    \item DIFFormer~\citep{wu2023difformer}:
    Proposes an energy-constrained diffusion model, leading to variants that are efficient and capable of capturing complex structures.
\end{itemize}

% \tw{add a table to include all baselines like IGNN}

% Other baselines:
% https://ieeexplore.ieee.org/stamp/stamp.jsp?arnumber=10498196

\subsection{Training \& Evaluation (Cont'd)}
For our method, we use $w_\CAL X\in\{0.01,0.1,0.6,8\}$, $w_0\in\{0.1,0.2,0.3,0.5,1,8\}$, $\tau\in\{0.01,0.1,1\}$, and dropout rate 0.2. We use the GNN architecture described in the method section %Section~\ref{ssec:method-detail}
with 8 GNN layers with hidden dimensionality 512 and add a two-layer MLP after each GNN layer for heterophilic graphs and use FAGCN for homophilic graphs. We use original node features as described in Section~\ref{ssec:method}, except that we use zeros as the features of graph nodes on Squirrel-F and that we normalize the features of graph nodes on Cora and CiteSeer after computing the features of feature nodes. We train the GNN with learning rate 0.00003 for 1000 steps using the Adam optimizer \citep{kingma2014adam}. Our method was implemented in PyTorch 2.7.0 and Deep Graph Library (DGL) 2.4.0, and experiments were run on Intel Xeon CPU @ 2.20GHz with 96GB memory and NVIDIA Tesla V100 32GB GPU.

\section{Definition of Homophily Metrics}
\label{app:homo-metric}

To measure to what extent \Ours{} can  boost graph homophily on heterophlic datasets, we consider two popular homophily metrics: \textit{feature homophily}~\citep{jin2022raw} and \textit{adjusted homophily}~\citep{platonov2024characterizing}. Formally, given a graph $\mathcal{G}$, \textit{feature homophily} $H^{\textnormal{feature}}$ is defined as follows:
\begin{equation}
\label{eq:homo-equation}
    H^{\textnormal{feature}} (\mathcal{G}) := \frac1{|\mathcal{E}|}\sum_{(v_i,v_j)\in \mathcal{E}} \textnormal{sim}(v_i,v_j),
\end{equation}
where $\textnormal{sim}(v_i,v_j) := \OP{cos}(\BM X[v_i,:], \BM X[v_j,:])$ is the cosine-similarity computed between features of nodes $v_i,v_j$. This metric is a variant of the \textit{generalized edge homophily ratio} $H^{\textnormal{edge}}$ proposed by~\citep{jin2022raw}, which measures the feature similarity between each of the connected node pairs in the graph dataset. Then, the \textit{adjusted homophily} $H^{\textnormal{adjusted}}$ is defined as follows:
\begin{equation}
    H^{\textnormal{adjusted}}(\mathcal{G}) := \frac{H^{\textnormal{edge}}(\mathcal{G}) - \sum_{c=1}^C D_c^2/(2|\mathcal{E}|)^2}{1 - \sum_{c=1}^C D_c^2/(2|\mathcal{E}|)^2},
\label{eq:adj-homo}
\end{equation}
where $C$ denotes the number of classes and $H^{\textnormal{edge}}(\mathcal{G})$ is \textit{edge homophily}, which is defined similarly as Equation~(\ref{eq:homo-equation}) with the similarity function $\textnormal{sim}(v_i,v_j) = \mathbf{1}_{\{y_{v_i}=y_{v_j}\}}$, and
\AL{D_c := \sum_{v\in\CAL V} \OP{deg}(v)1_{[y_v=c]}\label{eq:node-degree-sum}}
is the sum of degrees $\OP{deg}(v)$ of nodes with label $c$, where $y_{v}$ denotes the label of node $v$. Since we do not have node labels for the \textit{feature nodes} when computing \textit{adjusted homophily}, we assign them a ``soft label,'' which is a uniform probability distribution over the labels of its 1-hop neighbors.

% \begin{table*}[t]
% \caption{Summary of baseline methods.}
% \label{tab:baselines}
% \centering
% \resizebox{1\linewidth}{!}{
% \begin{tabular}{c|c}
% \toprule
% \textbf{Type} & \textbf{Baseline Methods}  \\
% \midrule
% \makecell{Non-Graph} & \makecell[l]{Multi-Layer Perceptron (MLP)}  \\
% \midrule
% \makecell{Homophilic\\GNNs}    & \makecell[l]{ChebNet~\cite{defferrard2016convolutional}, GCN~\cite{kipf2016semi}, SGC~\cite{wu2019simplifying},\\GAT~\cite{velivckovic2018graph},  GraphSAGE~\cite{hamilton2017inductive}, GIN~\cite{xu2018powerful},\\APPNP~\cite{gasteiger2018predict}, GCNII~\cite{chen2020simple}, GATv2~\cite{brody2021attentive},\\MixHop~\cite{abu2019mixhop}, TAGCN~\cite{du2017topology}, DAGNN~\cite{liu2020towards},\\JKNet~\cite{xu2018representation}, Virtual Node~\cite{gilmer2017neural}} \\
% \midrule

% \makecell{Heterophilic\\GNNs}  & \makecell[l]{ H2GCN~\cite{zhu2020beyond}, FAGCN~\cite{bo2021beyond}, OrderedGNN~\cite{song2023ordered}, GloGNN~\cite{li2022finding}, \\GGCN~\cite{yan2022two}, GPRGNN~\cite{chien2020adaptive}, ALT~\cite{xu2023node}}  \\
% \midrule
% \makecell{Graph\\Transformers}  & \makecell[l]{NodeFormer~\cite{wu2022nodeformer}, SGFormer~\cite{wu2024simplifying}, DIFFormer~\cite{wu2023difformer}} \\
% \bottomrule
% \end{tabular}
% }
% % \vspace{-1em}
% \end{table*}

\section{Additional Related Work}\label{app:related}

In modern machine learning research~\citep{ma2022graph,yu2026planetalign,yu2025joint,zhang2026guiding,bao2025latte,chen2024wapiti,wei2026agentic,wei2026diffkgw,wei2025cofirec,wei2025evo,wei2024robust,wei2022augmentations,cui2026adafuse,chen2026influence,liu2025seeing,liu2025breaking,liu2024logic,liu2024class,liu2024aim,liu2023topological,bartan2025fineamp,zeng2026subspace,zeng2026pave,zeng2026harnessing,zeng2025hierarchical,zeng2025interformer,zeng2024hierarchical,zeng2024graph,zeng2023parrot,zeng2023generative,zou2025transformer,zou2025latent,lin2026mixture,lin2025moralise,lin2024backtime,zhou2025geometric,jing2025trqa,qiu2026remix,qiu2025efficient,qiu2025ask,qiu2025efficient2,qiu2024tucket,qiu2023reconstructing,qiu2022dimes,xu2024discrete,li2025beyond,li2025haystack,li2025graph,li2025flow,li2023metadata,yoo2025embracing,yoo2025generalizable,yoo2024ensuring,chan2024group,wu2024fair,he2026powergrow,he2024sensitivity,wang2023networked}, 
a problem related to heterophily is over-squashing. The over-squashing problem in message passing neural networks arises when long-range information is exponentially compressed, preventing effective dissemination across the graph~\citep{alon2020bottleneck, shi2023exposition}. A primary research direction addresses this issue by identifying topological bottlenecks and modifying graph connectivity. \citet{topping2021understanding} established an initial framework linking oversquashing to graph Ricci curvature, demonstrating that negatively curved edges act as bottlenecks. Building on this idea, subsequent works have developed rewiring strategies inspired by curvature-based principles~\citep{nguyen2023revisiting, shi2023curvature}. Beyond curvature, \cite{black2023understanding} introduced a perspective using effective resistance.
Another line of research leverages spectral methods to counteract over-squashing, with notable approaches including spectral gaps~\citep{arnaiz2022diffwire}, expander graph constructions~\citep{deac2022expander}, and first-order spectral rewiring~\citep{karhadkar2022fosr}. More recently, \cite{di2023over} provided a comprehensive analysis of the factors contributing to oversquashing. Additional solutions explore advanced rewiring strategies and novel message-passing paradigms~\citep{barbero2023locality,qian2023probabilistically,behrouz2024graph}.

\section{Theoretical Analysis}\label{app:proofs}

\subsection{Assumptions}\label{app:ass}
In this subsection, we introduce the assumptions of our theoretical analysis, which are mild and realistic. 

Given a graph $\CAL G=(\CAL V,\CAL E,\BM X)$ with $\CAL E\ne\varnothing$ and $\BM X\in\{0,1\}^{\CAL V\times\CAL X}$, we define the feature similarity metric as $\OP{sim}(v_i,v_j):=\|\BM X[v_i,:]\land\BM X[v_j,:]\|_\infty$ and use the feature homophily as the homophily metric:
\AL{\OP{hom}(\CAL G):=\frac1{|\CAL E|}\sum_{(v_i,v_j)\in\CAL E}\OP{sim}(v_i,v_j).}
Furthermore, we assume that the original graph $\CAL G$ is heterophilic. That is, we have $\OP{hom}(\CAL G)<1$ while there exists a pair of nodes, $v_i,v_j\in\CAL V$ ($v_i\ne v_j$), such that $\OP{sim}(v_i,v_j)>0$ but $(v_i,v_j)\notin\CAL E$. 

Besides that, we assume that the given graph $\CAL G$ does not have too dense features. Formally, we assume that $|\CAL X|\le O(|\CAL V|)$ and that $\|\BM X\|_0\le O(|\CAL E|)$.
For the transformed graph $\CAL G^*$, we assume that every feature is used: for any feature $k\in\CAL X$, there exists a graph node $v_i\in\CAL V$ such that $\BM X[v_i,k]=1$. 

\subsection{Technical Lemma}
Here, we prove a technical lemma that we will use later.

\begin{LEM}\label{lem:add}
Let $\CAL A,\CAL B\subset\BB R$ be two nonempty, finite multisets such that
\AM{\frac1{|\CAL A|}\sum_{z\in\CAL A}z<\frac1{|\CAL B|}\sum_{z\in\CAL B}z.} Then,
\AM{\frac{1}{|\CAL A\sqcup\CAL B|}\sum_{z\in\CAL A\sqcup\CAL B}z>\frac1{|\CAL A|}\sum_{z\in\CAL A}z.}
\end{LEM}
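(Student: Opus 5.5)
The plan is a direct computation with the means written in terms of sums and sizes. Let $a:=|\CAL A|$, $b:=|\CAL B|$, $S_\CAL A:=\sum_{z\in\CAL A}z$ and $S_\CAL B:=\sum_{z\in\CAL B}z$. Since $\CAL A\sqcup\CAL B$ is a disjoint (multiset) union, we have $|\CAL A\sqcup\CAL B|=a+b$ and the sum over it is $S_\CAL A+S_\CAL B$. Both multisets are nonempty and finite, so $a,b\ge1$ and every denominator is positive.

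The key step is to express the combined mean as a convex combination of the two individual means:
\AM{\frac{S_\CAL A+S_\CAL B}{a+b}=\frac{a}{a+b}\cdot\frac{S_\CAL A}{a}+\frac{b}{a+b}\cdot\frac{S_\CAL B}{b}.}
Write $\mu_\CAL A:=S_\CAL A/a$ and $\mu_\CAL B:=S_\CAL B/b$. Subtracting $\mu_\CAL A$ from both sides gives
\AM{\frac{S_\CAL A+S_\CAL B}{a+b}-\mu_\CAL A=\frac{b}{a+b}(\mu_\CAL B-\mu_\CAL A).}
Here $b/(a+b)>0$ because $b\ge1$, and $\mu_\CAL B-\mu_\CAL A>0$ by hypothesis. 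The right-hand side is therefore strictly positive, which is exactly the claimed strict inequality.

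Alternatively, one can cross-multiply by the positive quantity $a(a+b)$. The claim then reduces to $a(S_\CAL A+S_\CAL B)>(a+b)S_\CAL A$, that is, $aS_\CAL B>bS_\CAL A$. This is the hypothesis multiplied by $ab>0$.

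There is no real obstacle here. The only point that needs care is that strictness relies on $\CAL B$ being nonempty, so that the weight $b/(a+b)$ is strictly positive. Nonemptiness of $\CAL A$ is needed only for its mean to be defined.
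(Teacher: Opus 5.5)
Your proof is correct and follows essentially the same approach as the paper, which also subtracts $\mu_{\mathcal A}$ from the combined mean and simplifies the difference to $\frac{|\mathcal B|}{|\mathcal A|+|\mathcal B|}(\mu_{\mathcal B}-\mu_{\mathcal A})>0$. Your cross-multiplication variant is an equally valid shortcut to the same conclusion.
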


\begin{proof}
To simplify notation, let
\AL{
\mu_A&:=\frac1{|\CAL A|}\sum_{z\in\CAL A}z,\\
\mu_B&:=\frac1{|\CAL B|}\sum_{z\in\CAL B}z,\\
\varDelta&:=\mu_\CAL B-\mu_\CAL A>0.
}
Then,
\AL{
&\frac{1}{|\CAL A\sqcup\CAL B|}\sum_{z\in\CAL A\sqcup\CAL B}z-\frac1{|\CAL A|}\sum_{z\in\CAL A}z
\\={}&\frac{1}{|\CAL A|+|\CAL B|}\bigg(\sum_{z\in\CAL A}z+\sum_{z\in\CAL B}z\bigg)-\frac1{|\CAL A|}\sum_{z\in\CAL A}z
\\={}&\frac{1}{|\CAL A|+|\CAL B|}\bigg(|\CAL A|\cdot\frac1{|\CAL A|}\sum_{z\in\CAL A}z+\sum_{z\in\CAL B}z\bigg)-\frac1{|\CAL A|}\sum_{z\in\CAL A}z
\\={}&\frac{1}{|\CAL A|+|\CAL B|}\bigg(|\CAL A|\cdot\mu_\CAL A+\sum_{z\in\CAL B}z\bigg)-\mu_\CAL A
\\={}&\frac{1}{|\CAL A|+|\CAL B|}\big(|\CAL A|\cdot\mu_\CAL A+|\CAL B|\cdot\mu_\CAL B\big)-\mu_\CAL A
\\={}&\frac{1}{|\CAL A|+|\CAL B|}\big(|\CAL A|\cdot\mu_\CAL A+|\CAL B|\cdot\mu_\CAL B-(|\CAL A|+|\CAL B|)\cdot\mu_\CAL A\big)
\\={}&\frac1{|\CAL A|+|\CAL B|}\big(|\CAL B|\cdot\mu_\CAL B-|\CAL B|\cdot\mu_\CAL A\big)
\\={}&\frac{|\CAL B|}{|\CAL A|+|\CAL B|}(\mu_\CAL B-\mu_\CAL A)
\\={}&\frac{|\CAL B|}{|\CAL A|+|\CAL B|}\varDelta>0
.}
It follows that
\AL{\frac{1}{|\CAL A\sqcup\CAL B|}\sum_{z\in\CAL A\sqcup\CAL B}z&>\frac1{|\CAL A|}\sum_{z\in\CAL A}z.\qedhere}
\end{proof}

\subsection{Proof of Theorem~\ref{thm:naive}}
\vspace{0.5em}\noindent\textbf{Homophily.} Since the original graph $\CAL G$ is homophilic, then there exists a pair of nodes, $v_i,v_j\in\CAL V$ ($v_i\ne v_j$), such that $\OP{sim}(v_i,v_j)=\|\BM X[v_i,:]\land\BM X[v_j,:]\|_\infty>0$ but $(v_i,v_j)\notin\CAL E$. According to the definition of $\CAL E^\dagger$, we know that $(v_i,v_j)\in\CAL E^\dagger\setminus\CAL E\ne\varnothing$, so $\CAL E^\dagger\setminus\CAL E\ne\varnothing$. 

Furthermore, for any $(v_i,v_j)\in\CAL E^\dagger\setminus\CAL E$, since $\OP{sim}(v_i,v_j)=\|\BM X[v_i,:]\land\BM X[v_j,:]\|_\infty>0$, then there exists a feature $k\in\CAL X$ such that $\BM X[v_i,k]\land\BM X[v_j,k]>0$. Since the feature matrix $\BM X$ is binary, then we must have
\AL{\BM X[v_i,k]=1,\qquad\BM X[v_j,k]=1.}
It follows that
\AL{\OP{sim}(v_i,v_j)&=\|\BM X[v_i,:]\land\BM X[v_j,:]\|_\infty\\
&=\max_{k'\in\CAL X}|\BM X[v_i,k']\land\BM X[v_j,k']|\\
&\ge|\BM X[v_i,k]\land\BM X[v_j,k]|\\
&=|1\land1|=1
.}
Since $\OP{hom}(\CAL G)<1$, then
\AL{\OP{sim}(v_i,v_j)\ge1>\OP{hom}(\CAL G).}
Therefore, by Lemma~\ref{lem:add} with
\AL{
\CAL A&:=\{\!\!\{\OP{sim}(v_i,v_j):(v_i,v_j)\in\CAL E\}\!\!\},\\
\CAL B&:=\{\!\!\{\OP{sim}(v_i,v_j):(v_i,v_j)\in\CAL E^\dagger\setminus\CAL E\}\!\!\},
}
we have
\AL{
\OP{hom}(\CAL G^\dagger)&=\frac1{|\CAL E^\dagger|}\sum_{(v_i,v_j)\in\CAL E^\dagger}\OP{sim}(v_i,v_j)\\
&=\frac1{|\CAL E\sqcup(\CAL E^\dagger\setminus\CAL E)|}\sum_{(v_i,v_j)\in\CAL E\sqcup(\CAL E^\dagger\setminus\CAL E)}\OP{sim}(v_i,v_j)\\
&=\frac{1}{|\CAL A\sqcup\CAL B|}\sum_{z\in\CAL A\sqcup\CAL B}z\\
&>\frac1{|\CAL A|}\sum_{z\in\CAL A}z\\
&=\frac1{|\CAL E|}\sum_{(v_i,v_j)\in\CAL E}\OP{sim}(v_i,v_j)\\
&=\OP{hom}(\CAL G)
.}

\vspace{0.5em}\noindent\textbf{Number of edges.} Since there are $|\CAL V|$ nodes in total, then the total number of node pairs is $\binom{|\CAL V|}2$. Recall that $\CAL E^\dagger\setminus\CAL E$ is the set of added edges. It follows that
\AL{|\CAL E^\dagger|-|\CAL E|&=|\CAL E^\dagger\setminus\CAL E|\le\binom{|\CAL V|}2\\&=\frac{|\CAL V|(|\CAL V|-1)}2=O(|\CAL V|^2).}

\subsection{Proof of Observation~\ref{lem:2hop}}
Since $\OP{sim}(v_i,v_j)=\|\BM X[v_i,:]\land\BM X[v_j,:]\|_\infty>0$, then there exists a feature $k\in\CAL X$ such that $\BM X[v_i,k]\land\BM X[v_j,k]>0$. Since the feature matrix $\BM X$ is binary, then we must have
\AL{\BM X[v_i,k]=1,\qquad\BM X[v_j,k]=1.}
This implies that $(v_i,x_k)\in\CAL E^*$ and that $(v_j,x_k)\in\CAL E^*$. Hence, there exists a length-$2$ path $v_i\to x_k\to v_j$ connecting graph nodes $v_i$ and $v_j$. Therefore, $v_i$ and $v_j$ are two-hop neighbors of each other. 

\subsection{Proof of Theorem~\ref{thm:eff}}
\vspace{0.5em}\noindent\textbf{Homophily.} Since the original graph $\CAL G$ is homophilic, then there exists a pair of nodes, $v_i,v_j\in\CAL V$ ($v_i\ne v_j$), such that $\OP{sim}(v_i,v_j)=\|\BM X[v_i,:]\land\BM X[v_j,:]\|_\infty>0$ but $(v_i,v_j)\notin\CAL E$. Since $\OP{sim}(v_i,v_j)=\|\BM X[v_i,:]\land\BM X[v_j,:]\|_\infty>0$, then there exists a feature $k\in\CAL X$ such that $\BM X[v_i,k]\land\BM X[v_j,k]>0$. Since the feature matrix $\BM X$ is binary, then we must have
\AL{\BM X[v_i,k]=1,\qquad\BM X[v_j,k]=1.}
This implies that $(v_i,x_k)\in\CAL E^*\setminus\CAL E$ and that $(v_j,x_k)\in\CAL E^*\setminus\CAL E$. Thus, $\CAL E^*\setminus\CAL E$ is nonempty.

Furthermore, for any feature node $x_k\in\CAL V_\CAL X$, since any feature edge $(v_i,x_k)\in\CAL E_\CAL X$ ensures $\BM X[v_i,k]=1$, then we have
\AL{
\BM X^*[x_k,k]&=\frac1{|\CAL E_\CAL X\cap(\CAL V\times\{x_k\})|}\sum_{v_i:(v_i,x_k)\in\CAL E_\CAL X}\BM X[v_i,k]\\
&=\frac1{|\CAL E_\CAL X\cap(\CAL V\times\{x_k\})|}\sum_{v_i:(v_i,x_k)\in\CAL E}1\\
&=\frac1{|\CAL E_\CAL X\cap(\CAL V\times\{x_k\})|}\sum_{v_i:(v_i,x_k)\in\CAL E\cap(\CAL V\times\{x_k\})}1\\
&=1
.}

Finally, for any added feature edge $(v_i,x_k)\in\CAL E^*\setminus\CAL E=\CAL E_\CAL X$,
\AL{
\OP{sim}(v_i,x_k)&=\|\BM X[v_i,:]\land\BM X[x_k,:]\|_\infty\\
&=\max_{k'\in\CAL X}|\BM X[v_i,k']\land\BM X[x_k,k']|\\
&\ge|\BM X[v_i,k]\land\BM X[x_k,k]|\\
&=|1\land1|=1
.}
Since $\OP{hom}(\CAL G)<1$, then
\AL{\OP{sim}(v_i,x_k)\ge1>\OP{hom}(\CAL G).}
Therefore, by Lemma~\ref{lem:add} with
\AL{
\CAL A&:=\{\!\!\{\OP{sim}(v_i,v_j):(v_i,v_j)\in\CAL E\}\!\!\},\\
\CAL B&:=\{\!\!\{\OP{sim}(v_i,x_k):(v_i,x_k)\in\CAL E_\CAL X\}\!\!\},
}
we have
\AL{
\OP{hom}(\CAL G^*)&=\frac1{|\CAL E^*|}\sum_{(u,u')\in\CAL E^*}\OP{sim}(u,u')\\
&=\frac1{|\CAL E\sqcup\CAL E_\CAL X|}\sum_{(u,u')\in\CAL E\sqcup\CAL E_\CAL X}\OP{sim}(u,u')\\
&=\frac{1}{|\CAL A\sqcup\CAL B|}\sum_{z\in\CAL A\sqcup\CAL B}z\\
&>\frac1{|\CAL A|}\sum_{z\in\CAL A}z\\
&=\frac1{|\CAL E|}\sum_{(v_i,v_j)\in\CAL E}\OP{sim}(v_i,v_j)\\
&=\OP{hom}(\CAL G)
.}

\vspace{0.5em}\noindent\textbf{Number of nodes.} Since $|\CAL X|\le O(|\CAL V|)$, then
\AL{|\CAL V_\CAL X|=|\CAL X|\le O(\CAL V).}
It follows that
\AL{|\CAL V^*|&=|\CAL V|+|\CAL V_\CAL X|\\
&\le|\CAL V|+O(|\CAL V|)\\
&=O(|\CAL V|)
.}

\vspace{0.5em}\noindent\textbf{Number of edges.} Since $\BM X$ is a binary matrix, then $\|\BM X\|_1=\|\BM X\|_0\le O(|\CAL E|)$. Hence,
\AL{
|\CAL E_\CAL X|&=\sum_{v_i\in\CAL V}\sum_{x_k\in\CAL V_\CAL X}1_{[(v_i,x_k)\in\CAL E_\CAL X]}\\
&=\sum_{v_i\in\CAL V}\sum_{k\in\CAL X}1_{[(v_i,x_k)\in\CAL E_\CAL X]}\\
&=\sum_{v_i\in\CAL V}\sum_{k\in\CAL X}1_{[\BM X[v_i,k]=1]}\\
&=\sum_{v_i\in\CAL V}\sum_{k\in\CAL X}\BM X[v_i,k]\\
&=\sum_{v_i\in\CAL V}\sum_{k\in\CAL X}|\BM X[v_i,k]|\\
&=\|\BM X\|_1=\|\BM X\|_0\le O(|\CAL E|)
.}
It follows that
\AL{
|\CAL E^*|&=|\CAL E|+|\CAL E_\CAL X|\\
&\le|\CAL E|+O(|\CAL E|)\\
&=O(|\CAL E|)
.}

\section{Use of Large Language Models}

We made limited and controlled use of large language models (LLMs) solely for stylistic refinement and improving readability of the text. All scientific content, methodology, experiments, and conclusions were fully conceived and validated by the authors. The role of LLMs was purely editorial and does not constitute co-authorship.

\end{document}